\documentclass{article}



\usepackage[preprint, nonatbib]{neurips_2022}



\usepackage{amsmath}
\usepackage[utf8]{inputenc} 
\usepackage[T1]{fontenc}    
\usepackage{hyperref}       
\usepackage{url}            
\usepackage{booktabs}       
\usepackage{amsfonts}       
\usepackage{nicefrac}       
\usepackage{microtype}      
\usepackage{xcolor}         
\usepackage{comment}
\usepackage{multirow}
\usepackage{graphicx}
\usepackage{tabularx}
\usepackage{wrapfig}
\usepackage{mdwlist}

\newcolumntype{C}{>{\centering\arraybackslash}X}
\usepackage{amsthm}
\newtheorem{theorem}{Theorem}
\newtheorem{definition}{Definition}[section]
\newtheorem{lemma}{Lemma}

\title{Introspective Learning : A Two-Stage Approach for Inference in Neural Networks}

\author{%
  Mohit~Prabhushankar\\
  Electrical and Computer Engineering\\
  Georgia Institute of Technology\\
  Atlanta, GA 30308 \\
  \texttt{mohit.p@gatech.edu}
  \And
  Ghassan~AlRegib\\
  Electrical and Computer Engineering\\
  Georgia Institute of Technology\\
  Atlanta, GA 30308 \\
  \texttt{alregib@gatech.edu} \\
}

\begin{document}

\onecolumn 
\begin{basedescript}{\desclabelstyle{\pushlabel}\desclabelwidth{6em}}

\item[\textbf{Citation}]{M. Prabhushankar and G AlRegib, "Introspective Learning : A Two-Stage Approach for Inference in Neural Networks" \emph{Advances in Neural Information Processing Systems} (2022), Nov 29 - Dec 1, 2022.}

\item[\textbf{Review}]{Data of Submission : 19 May 2022 \\ Date of Revision : 2 Aug 2022 \\ Date of Accept: 14 Sept 2022}

\item[\textbf{Codes}]{\url{https://github.com/olivesgatech/Introspective-Learning}}

\item[\textbf{Copyright}]{The authors grant NeurIPS a non-exclusive, perpetual, royalty-free, fully-paid, fully-assignable license to copy, distribute and publicly display all or parts of the paper. Personal use of this material is permitted. Permission from the authors must be obtained for all other uses, in any current or future media, including reprinting/republishing this material for advertising or promotional purposes, for resale or redistribution to servers or lists. }

\item[\textbf{Contact}]{\href{mailto:mohit.p@gatech.edu}{mohit.p@gatech.edu}  OR \href{mailto:alregib@gatech.edu}{alregib@gatech.edu}\\ \url{https://ghassanalregib.info/} \\ }
\end{basedescript}

\thispagestyle{empty}
\newpage
\clearpage
\setcounter{page}{1}

\maketitle

\begin{abstract}
 In this paper, we advocate for two stages in a neural network's decision making process. The first is the existing feed-forward inference framework where patterns in given data are sensed and associated with previously learned patterns. The second stage is a slower reflection stage where we ask the network to reflect on its feed-forward decision by considering and evaluating all available choices. Together, we term the two stages as introspective learning. We use gradients of trained neural networks as a measurement of this reflection. A simple three-layered Multi Layer Perceptron is used as the second stage that predicts based on all extracted gradient features. We perceptually visualize the \emph{post-hoc} explanations from both stages to provide a visual grounding to introspection. For the application of recognition, we show that an introspective network is $4\%$ more robust and $42\%$ less prone to calibration errors when generalizing to noisy data. We also illustrate the value of introspective networks in downstream tasks that require generalizability and calibration including active learning, out-of-distribution detection, and uncertainty estimation. Finally, we ground the proposed machine introspection to human introspection for the application of image quality assessment.
\end{abstract}

\section{Introduction}
\label{sec:Intro}

Introspection is the act of looking into one's own mind~\cite{boring1953history}. Classical introspection has its roots in philosophy. Locke~\cite{locke1847essay}, the founder of empiricism, held that all human ideas come from experience. This experience is a result of both sensation and reflection. By sensation, one receives passive information using the sensory systems of sight, sound, and touch. Reflection is the objective observation of our own mental operations. Consider the toy example in Fig.~\ref{fig:Concept}. Given an image $x$ and an objective of recognizing $x$, we first sense some key features in the bird. These features include the color and shape of the body, feathers and beak. The features are chosen based on our existing notion of what is required to contrast between birds. We then associate these features with our existing knowledge of birds and make a coarse decision that $x$ is a spoonbill. This is the sensing stage. Reflection involves questioning the coarse decision and asking why $x$ cannot be a Flamingo, Crane, Pig or any other class. If the answers are satisfactory, then an introspective decision that $x$ is indeed a spoonbill is made. The observation of this reflection is introspection.

In this paper, we adopt this differentiation between sensing and reflection to advocate for two-stage neural network architectures for perception-based applications. Specifically, we consider classification. The sensing stage is any existing feed-forward neural network including VGG~\cite{Simonyan15}, ResNet~\cite{he2016deep}, and DenseNet~\cite{huang2017densely} architectures among others. These networks sense patterns in $x$ and make a coarse feed-forward decision, $\hat{y}$. The second stage examines this decision by reflecting on the introspective question \emph{`Why $\hat{y}$, rather than $y_I$?'} where $y_I$ is any introspective class that the sensing network has learned. Note that there is no external intervention or new information that informs the reflection stage. Hence, the introspective features are \emph{post-hoc} in the sense that they are not causal answers to the introspective questions but rather, are the network's notion of the answers. This is inline with the original definition of introspection which is that introspection is the observation of existing knowledge reflected upon by the network. These \emph{post-hoc} introspective features are visualized using Grad-CAM~\cite{selvaraju2017grad} in the reflection stage of Fig.~\ref{fig:Concept}.

The challenge that we address in this paper is in defining the reflection stage in terms of neural networks. The authors in~\cite{schneider2020reflective}, utilize the same intuition to construct a~\emph{reflective stage} composed of explanations. However, when the train and test distributions are dissimilar, the predictions and hence the explanations are incorrect. We overcome this by considering alternative questions for the explanation. Consider any dataset with $N$ classes. A network trained on this dataset will have $N$ possible introspective questions and answers, similar to the ones shown in Fig.~\ref{fig:Concept}. Our goal is to implicitly extract features that answer all $N$ introspective questions without explicitly training on said features. This involves 1) implicitly creating introspective questions, 2) answering the posed questions to obtain introspective features, and 3) predicting the output $\tilde{y}$ from the introspective features. We show that gradients w.r.t network parameters store notions about the difference between classes and can be used as introspective features. We use an MLP termed $\mathcal{H}(\cdot)$, as our introspective network that combines all features to predict $\tilde{y}$. A limitation of the proposed method is the size of $N$ which is discussed in Sections~\ref{sec:Features}.

We show that the introspective prediction $\tilde{y}$ is robust to noise. An intuition for this robustness is that not only should the network sense the feed-forward patterns, it must also satisfy $\mathcal{H}(\cdot)$'s $N$ notions of difference between classes. Hence, during inference, we extract $N$ additional features that inform the introspective prediction $\tilde{y}$. The main contributions of this paper are the following:
\begin{enumerate}
    \item We define implicit introspective questions that allow for reflection in a neural network. This reflection is measured using loss gradients w.r.t. network parameters across all possible introspective classes in Section~\ref{sec:Features}.
    \item We provide a methodology to efficiently extract introspective loss gradients and combine them using a second $\mathcal{H}(\cdot)$ MLP network in Section~\ref{sec:Theory}. 
    \item We illustrate $\mathcal{H}(\cdot)$'s robust and calibrated nature in Section~\ref{sec:Exp}. We validate the effectiveness of this generalization in downstream applications like active learning, out-of-distribution detection, uncertainty estimation, and Image Quality Assessment in Appendix~\ref{sec:Downstream}.
\end{enumerate}

\begin{figure}
\begin{center}
\minipage{\textwidth}%
\includegraphics[width=\linewidth]{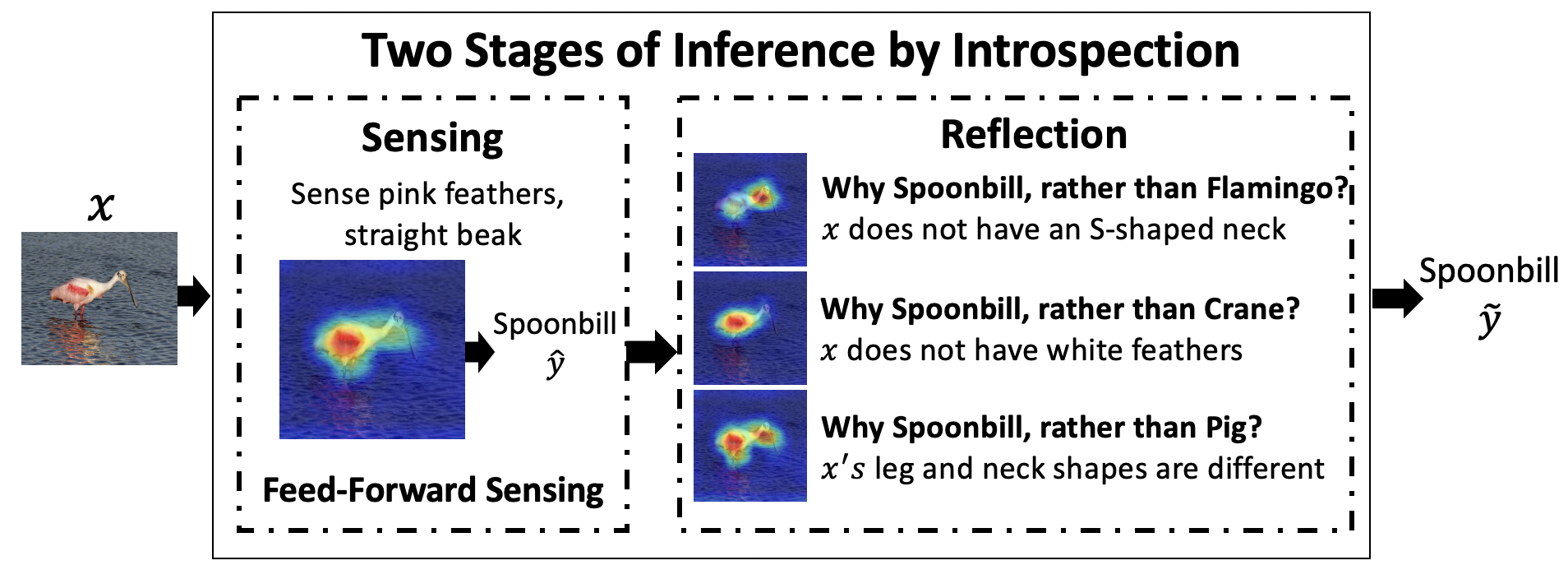}
\endminipage
\vspace{-1.5mm}
\caption{Example of the introspection process. The visual \emph{post-hoc} explanations are from Grad-CAM~\cite{selvaraju2017grad} framework using the proposed features. Additional heatmaps along with the generation process is provided in Appendix~\ref{App:Abductive}. The written text is for illustrative purpose only.}\label{fig:Concept}\vspace{-5mm}
\end{center}
\end{figure}

\section{Background}
\label{sec:Background}
\vspace{-1.5mm}
\subsection{Introspective questions, features, and network}
\vspace{-1.5mm}
\paragraph{Introspective questions} The choice of \emph{`Why $\hat{y}$, rather than $y_I$'?} is not arbitrary. The authors in~\cite{alregib2022explanatory} describe three questions that complete \emph{post-hoc} explanations - correlation, counterfactual, and contrastive questions. These questions together allow for an alternate form of reasoning within neural networks called abductive reasoning. The sensing network predicts based on correlations. Since, we do not intervene within the data during the reflection stage, counterfactual questions cannot be answered. Hence, we use contrastive questions as our introspective questions. Further details regarding abductive reasoning and our choice of question is provided in Appendix~\ref{App:Abductive}. 
\vspace{-1.5mm}
\paragraph{Gradients as Introspective Features}
The gradients from a base network have been utilized in diverse applications including \emph{post-hoc} visual explanations~\cite{selvaraju2017grad, prabhushankar2020contrastive, prabhushankar2021extracting}, adversarial attacks~\cite{goodfellow2014explaining}, uncertainty estimation~\cite{lee2020gradients}, anomaly detection~\cite{kwon2020backpropagated, kwon2020novelty}, and saliency detection~\cite{sun2020implicit} among others. Fisher Vectors use gradients of generative models to characterize the change that data creates within features~\cite{jaakkola1999exploiting}.~\cite{cohn1994neural} uses gradients of parameters to characterize the change in manifolds when new data is introduced to an already trained manifold. Our framework uses the intuition from~\cite{cohn1994neural} to characterize changes for a datapoint that is perceived as new, due to it being assigned an introspective class $y_I$, that is different from its predicted class $\hat{y}$. In~\cite{zinkevich2017holographic}, the authors view the network as a graph and intervene within it to obtain \emph{holographic} features. Our introspective features are also \emph{holographic} in the sense that they characterize the change between $\hat{y}$ and $y_I$ without changing the network. However, our features do not require interventions that become expensive with scale.
\vspace{-1.5mm}
\paragraph{Two-stage networks}
The usage of two-stage approaches to inference in neural networks is not new. In~\cite{schneider2020reflective}, the authors extract Grad-CAM explanations from feed-forward networks to train a~\emph{reflective stage}. However, our framework involves reflecting on all contrastive questions rather than correlation questions. The authors in~\cite{chen2020simple} propose SimCLR, a self-supervised framework where multiple data augmentation strategies are used to contrastively train an overhead MLP. The MLP provides features which are stored as a dictionary. This feature dictionary is used as a look-up table for new test data. In this paper, we use gradients against all classes as features and an MLP $\mathcal{H}(\cdot)$, to predict on these features. \cite{zhou2021amortized} and~\cite{bibas2019deep} consider all classes in a conditional maximum likelihood estimate on test data to retrain the model. These works differ from ours in our usage of the base sensing network.~\cite{mu2020gradients} uses gradients and activations together as features and note that the validity of gradients as features is in pretrained base networks rather than additional parameters from the two-stage networks. This adds to our argument of using two-stage networks but with loss gradients against introspective classes.

\subsection{Feed-forward Features}
\vspace{-1.5mm}
For the application of recognition, a sensing neural network $f(\cdot)$ is trained on a distribution $\mathcal{X}$ to classify data into $N$ classes. The network learns notions about data samples when classifying them. These notions are stored as network weights $W$. Given a data sample $x$, $f(x)$ is a projection on the network weights. Let $y_{feat}$ be the logits projected before the final fully connected layer. In the final fully connected layer $f_L(\cdot)$, the parameters $W_L$ can be considered as $N$ filters each of dimensionality $d_{L-1}\times 1$. The output of the network $\hat{y}$ is given by,
\begin{equation}
\label{eq:Filter}
\begin{gathered}
    y_{feat} = f_{L-1}(x), \forall  y\in \Re^{N\times 1},\\
    \hat{y} = \operatorname*{arg\,max} (W_L^T y_{feat}), \forall  W_L\in \Re^{d_{L-1}\times N}, f_{L-1}(x)\in \Re^{d_{L-1}\times 1}.
\end{gathered}
\end{equation}
Here $\hat{y}$ is the feed-forward inference and $y_{feat}$ are the feed-forward features. In this paper, we compare our introspective features against feed-forward features. Since introspection occurs after $f(\cdot)$, all our results are \emph{plug-in} on top of existing $f(\cdot)$.

\vspace{-2mm}
\section{Introspective Features}
\label{sec:Features}
\vspace{-2mm}
In this section, we describe introspective features and implicitly extract them using the sensing network. We then analyze them for sparsity, efficiency, and robustness.

\begin{definition}[Introspection]\label{def:Introspection}
Given a network $f(\cdot)$, a datum $x$, and the network's prediction $f(x) = \hat{y}$, introspection in $f(\cdot)$ is the measurement of change induced in the network parameters when a label $y_I$ is introduced as the label for $x$. This measurement is the gradient induced by a loss function $J(y_I, \hat{y})$, w.r.t. the network parameters.
\end{definition}

This definition for introspection is in accordance with the sensing and reflection stages in Fig.~\ref{fig:Concept}. The network's prediction $\hat{y}$ is the output of the sensing stage and the change induced by an introspective label, $y_I$, is the network reflecting on its decision $\hat{y}$ as opposed to $y_I$. Note that introspection can occur when $\hat{y}$ is contrasted against any trained label $y_I, I \in [1,N]$. For instance, in Fig.~\ref{fig:Concept}, the network is asked to reflect on its decision of spoonbill by considering other $y_I$ that $x$ can take - flamingo, crane, or pig. These results are extracted from ImageNet~\cite{deng2009imagenet} and hence, $I \in [1,1000]$. Additional visual introspective saliency maps similar to Fig.~\ref{fig:Concept} are provided in Appendix~\ref{App:Abductive}.

Reflection is the empirical risk that the network has predicted $x$ as $\hat{y}$ instead of $y_I$. Given the network parameters, this risk is measured through some loss function $J(y_I, \hat{y})$. $y_I$ is a one-hot vector with a one at the $I^{th}$ location. The change that is induced in the network is given by the gradient of $J(y_I, \hat{y})$ w.r.t. the network parameters. For an $N$-class classifier, there are $N$ possible introspective classes and hence $N$ possible gradients each given by, $r_I = \nabla_{W} J(y_I, \hat{y}), I \in [1,N]$. Here, $r_I$ are the introspective features. Since we introspect based on classes, we measure the change in network weights in the final fully connected layer. Therefore, the introspective features are given by,

\begin{equation}\label{eq:Grads}\vspace{-1.5mm}
    r_I = \nabla_{W_L} J(y_I, \hat{y}), I \in [1,N], r_I \in \Re^{d_{L-1}\times N}
\end{equation}\vspace{-1.5mm}

where $W_L$ are the network weights for the final fully connected layer. Note that the final fully connected layer from Eq.~\ref{eq:Filter} has a dimensionality of $\Re^{d_{L-1}\times N}$. For every $x$, Eq.~\ref{eq:Grads} is applied $N$ times to obtain $N$ separate $r_I$. We first analyze these features.
\vspace{-3mm}
\subsection{Sparsity and Robustness of Introspective Features}
\label{subsec:Complexity}
Consider $r_I$ in Eq.~\ref{eq:Grads}. Each $r_I$ is a $d_{L-1}\times N$ matrix. Expressing gradients in $r_I$ separately w.r.t. the different filters in $W_L$, we have a row-wise concatenated set of gradients given by,

\begin{figure}[!t]
\begin{center}
\minipage{\textwidth}%
  \includegraphics[width=\linewidth]{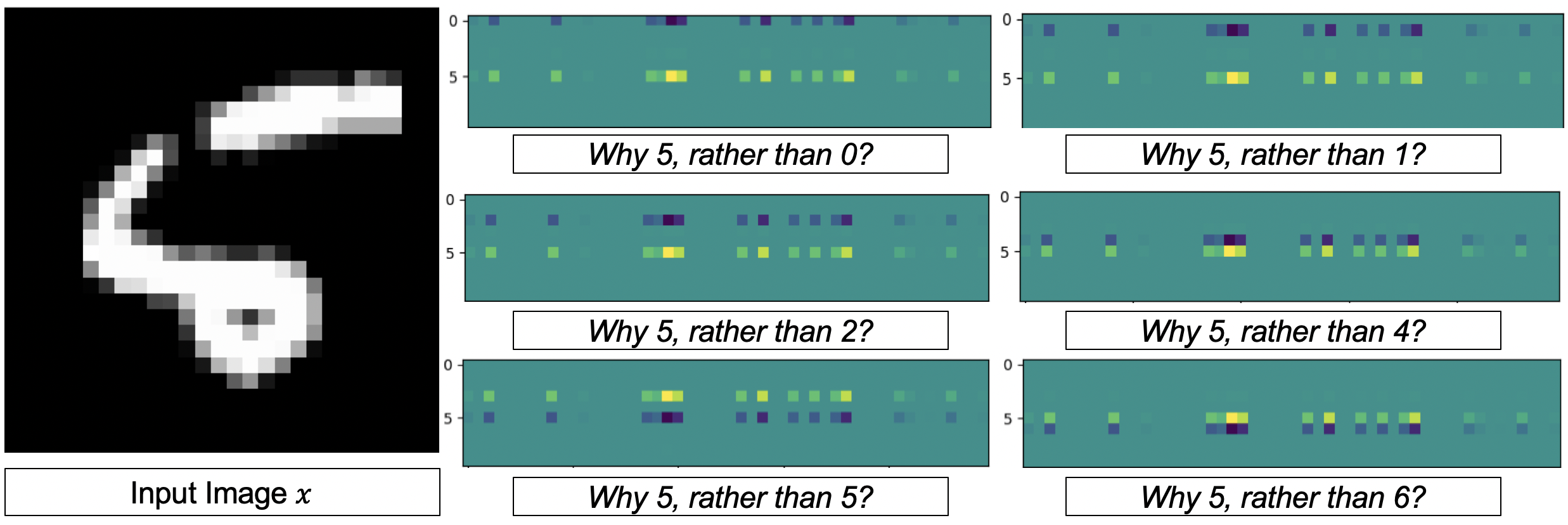}
\endminipage
\end{center}
\vspace{-2mm}
\caption{For the input image on the left, the $\nabla_{W_L} J(y_I, 5)$ are shown on the right. Each image is a visualization of the $50\times 10$ gradient matrix. All images are sparse except in the prediction row $5$ and introspective question row $i$.}\label{fig:Sparse}
\end{figure}

\begin{equation}\label{eq:Expanded_Grads}
    r_I = [\nabla_{W_{L,1}} J(y_I, \hat{y}); \nabla_{W_{L,2}} J(y_I, \hat{y}); \nabla_{W_{L,3}} J(y_I, \hat{y}) \dots \nabla_{W_{L,N}} J(y_I, \hat{y})] 
\end{equation}

where each $W_{L,j} \in \Re^{d_{L-1}\times 1}$ and $r_I \in \Re^{d_{L-1}\times N^2}$. For all data $x \in \mathcal{X}$ the following lemma holds:

\begin{lemma}\label{lem:Space}
Given a unique ordered pair $(x,\hat{y})$ and a well-trained network $f(\cdot)$, the gradients for a loss function $J(y_I, \hat{y})$ w.r.t. classes are pairwise orthogonal under the second-order Taylor series approximation, each class paired with the predicted class.  
\end{lemma}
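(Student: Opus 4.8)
The plan is to work directly with the softmax cross-entropy loss and exploit the rank-one structure of the final-layer gradient. First I would fix the ordered pair $(x,\hat{y})$, write $z = W_L^T y_{feat}$ for the logits of Eq.~\ref{eq:Filter} and $p = \mathrm{softmax}(z)$ for the output, and differentiate $J(y_I,\hat{y}) = -\log p_I$. Since $y_I$ is one-hot at $I$, the per-column derivative from Eq.~\ref{eq:Grads} is $\nabla_{W_{L,j}} J(y_I,\hat{y}) = (p_j - \delta_{jI})\,y_{feat}$, where $\delta$ is the Kronecker delta, so that $r_I = y_{feat}\,(p-e_I)^T$ is rank one with $e_I$ the $I$-th standard basis vector. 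This already reproduces the sparsity seen in Fig.~\ref{fig:Sparse}: the only columns of $r_I$ that are not negligible are the introspective column $I$ and the predicted column $\hat{y}$.

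Next I would bring in the \emph{well-trained} hypothesis through the second-order Taylor approximation. Since the network has converged on $(x,\hat{y})$, we have $\nabla_{W_L} J(y_{\hat{y}},\hat{y})\approx 0$, hence $p_{\hat{y}}\to 1$ and $p_k\to 0$ for $k\neq\hat{y}$, so expanding the log-sum-exp around the saturated configuration $p\approx e_{\hat{y}}$ gives $r_I \approx y_{feat}(e_{\hat{y}}-e_I)^T$ for every $I\neq\hat{y}$, up to higher-order terms. I would then evaluate the pairwise Frobenius inner product
\[
\langle r_I, r_{I'}\rangle = \|y_{feat}\|^2\,(p-e_I)^T(p-e_{I'}) = \|y_{feat}\|^2\big(\|p\|^2 - p_I - p_{I'} + \delta_{II'}\big)
\]
and argue that the bracket controls orthogonality.

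The hard part is precisely that bracket: in the naive one-hot limit the shared predicted-class component $e_{\hat{y}}$ survives and leaves a residual $\|y_{feat}\|^2$, so the full gradients are \emph{not} literally orthogonal. The real content of the lemma therefore lies in how the predicted-class direction is handled, which I expect to be the crux. I would address it in one of two ways. In the first, decompose each gradient as $r_I = c + d_I$ with common term $c = p_{\hat{y}}\,y_{feat}\,e_{\hat{y}}^T$ and introspective-specific term $d_I = (p_I-1)\,y_{feat}\,e_I^T$; the $d_I$ occupy disjoint columns, so $\langle d_I, d_{I'}\rangle = (p_I-1)(p_{I'}-1)\|y_{feat}\|^2\,e_I^T e_{I'} = 0$, meaning the gradients are orthogonal modulo the single shared direction that the pairing with $\hat{y}$ factors out. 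In the second, measure orthogonality after centering: with $\tilde r_I = r_I - \tfrac1N\sum_J r_J = y_{feat}(\tfrac1N\mathbf{1}-e_I)^T$, a short computation gives $\langle\tilde r_I,\tilde r_{I'}\rangle = \|y_{feat}\|^2(\delta_{II'}-\tfrac1N)$, which is $O(1/N)$ off-diagonally and vanishes as the number of classes grows, tying the statement to the large-$N$ regime flagged in Section~\ref{sec:Features}.

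Finally I would check consistency with the second-order hypothesis through the Hessian. Because $\partial^2 J/\partial z_k\partial z_l = p_k\delta_{kl}-p_kp_l$ carries no dependence on the introspective label $I$, the curvature is shared across all $N$ questions; this is what legitimizes approximating every $r_I$ around the same saturated point $p\approx e_{\hat{y}}$ and justifies calling the result a \emph{second-order} statement. Once the predicted-class direction is quotiented out (or absorbed into the pairing with $\hat{y}$), the surviving introspective directions are orthogonal essentially by the disjointness of their supports, so I expect the isolation and clean treatment of that shared component to be the only genuine obstacle.
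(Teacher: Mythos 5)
Your proposal is correct and lands on the same structural fact as the paper's proof in Appendix~\ref{App:Lemma1} --- each $r_I$ is supported on exactly two columns of $W_L$, the introspective column $I$ and the predicted column $\hat{y}$ --- but the mechanics are genuinely different. The paper never differentiates the exact softmax: it Taylor-expands each exponential inside the log-sum-exp to second order, invokes well-trainedness to discard every logit except $y_{\hat{y}}$, and simplifies the loss to $J(y_I,\hat{y}) = -y_I + \log(c) + \log\big(1+\frac{y_{\hat{y}}}{2}\big)$ as in Eq.~\ref{eq:lem1_eq}, whose gradient visibly touches only the filters $W_{L,I}$ and $W_{L,\hat{y}}$; orthogonality is then asserted at the level of spans (``orthogonal to all other filter gradient pairs,'' Eq.~\ref{eq:J_Final}), and no inner product is ever computed. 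You instead differentiate exactly, obtaining the rank-one form $r_I = y_{feat}\,(p-e_I)^T$, and only then pass to the saturated limit $p\approx e_{\hat{y}}$. What your route buys is precision at exactly the point where the paper is loosest: your Frobenius computation makes explicit that the raw gradients are \emph{not} literally orthogonal (the shared $e_{\hat{y}}$ component leaves the residual $\|y_{feat}\|^2$), and your decomposition $r_I = c + d_I$ with disjointly supported $d_I$ --- or the centered $O(1/N)$ bound --- makes rigorous what the lemma's phrase ``each class paired with the predicted class'' silently absorbs; your Hessian remark, that the curvature $p_k\delta_{kl}-p_kp_l$ is independent of $I$, also justifies expanding all $N$ gradients around the same saturated point, which the paper leaves implicit. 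What the paper's route buys is the closed form Eq.~\ref{eq:lem1_eq} itself, which is reused verbatim to prove Theorem~\ref{lem:Time} (collapsing the $N$ one-hot backprops into a single $\textbf{1}_N$ pass) and to explain the sparsity pattern of Fig.~\ref{fig:Sparse}; your rank-one form would serve both purposes with trivial modification. There is no gap: your isolation of the shared predicted-class direction as the crux, and its explicit quotienting, is if anything more careful than the paper's own argument.
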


\begin{proof} Provided in Appendix~\ref{App:Lemma1}.
\end{proof}
\textbf{Sparsity} Lemma~\ref{lem:Space} states that backpropagating class $y_I$ does not provide any information to $W_{L,j}, j \neq I$ and hence there is no need to use $\nabla_{W_{L,j}} J(y_j, \hat{y}), j \neq i$ as features when considering $y_I$. The proof is provided in Appendix~\ref{App:Lemma1}. $\nabla_W J(y_I, \hat{y})$ for an introspective class reduces to,
\vspace{-1.5mm}
\begin{equation}\label{eq:lem1_eq}
\nabla_W J(y_I, \hat{y}) = -\nabla_W y_I + \nabla_W \text{log}\bigg(1+\frac{y_{\hat{y}}}{2}\bigg).
\end{equation}
where $y_{\hat{y}}$ is the logit associated with the predicted class. We demonstrate the sparsity of Eq.~\ref{eq:lem1_eq} in Fig.~\ref{fig:Sparse}. A two-layer CNN is trained on \texttt{MNIST}~\cite{lecun1998gradient} dataset with a test accuracy exceeding $99\%$. This satisfies the condition for Lemma~\ref{lem:Space} that $f(\cdot)$ is well-trained. The final fully connected layer in this network has a size of $50\times 10$. We provide an input image $x$ of number $5$ to a trained network as shown in Fig.~\ref{fig:Sparse}. The network correctly identifies the image as a $5$. We then backpropagate the introspective class $0$ using $J(5, 0)$ with $\hat{y} = 5$ and $y_I = 0$. This answers the question \emph{`Why 5, rather than 0?'}. The gradient features in the final fully connected layer are the same dimensions as the final fully connected layer, $50\times 10$ or more generally $\mathcal{O}(d_{L-1}\times N)$. This matrix is displayed as a normalized image in Fig.~\ref{fig:Sparse}. Yellow scales to $1$ and blue is $-1$ while green is $0$. It can be seen that the only values present in the matrix are negative at $W_{L,0}$, in blue, and positive in $W_{L,5}$, in yellow. This validates Eq.~\ref{eq:Space_Grads} that for a fully-trained network the only values, and hence the only information, required from $W_L$ for $I = 0$ is $\nabla_{W_{L,0}}$. We show the matrix $\nabla_{W_{L}}$ when $I = 0, 1, 2, 4, 5, 6$. The difference among all matrices is the location of the negative values that exist at $\nabla_{W_{L,I}}$ for different values of $I$.

\textbf{Robustness} Eq.~\ref{eq:lem1_eq} motivates the generalizable nature of introspective features. Consider some noise added to $x$. To change the prediction $\hat{y}$, the noise must sufficiently decrease $y_{\hat{y}}$ from Eq.~\ref{eq:lem1_eq} and increase the closest logit value, $y_I$, to change the prediction. Hence, it needs to change one orthogonal relationship. However, by constraining our final prediction $\Tilde{y}$ on $N$ such features, the noise needs to change the orthogonal relationship between $N$ pairwise logits. This motivates an introspective network $\mathcal{H}(\cdot)$ that is conditioned on all $N$ pairwise logits. 

\subsection{Efficient Extraction of Introspective Features}
\label{subsec:Complexity_Time}
From Lemma~\ref{lem:Space}, the introspective feature is only dependent on the predicted class $\hat{y}$ and the introspective class $y_I$ making their span orthogonal to all other gradients. Hence,  
\begin{equation}\label{eq:Space_Grads}
    r_I = \nabla_{W_{L,I}} J(y_I, \hat{y}), I \in [1,N], r_I \in \Re^{d_{L-1}\times 1}
\end{equation}
Building on Lemma~\ref{lem:Space}, we present the following theorem.
\begin{theorem}\label{lem:Time}
Given a unique ordered pair $(x,\hat{y})$ and a well-trained network $f(\cdot)$, the gradients for a loss function $J(y_I, \hat{y}), I\in[1,N]$ w.r.t. classes when $y_I$ are $N$ orthogonal one-hot vectors is equivalent to when $y_I$ is a vector of all ones, under the second-order Taylor series approximation.
\end{theorem}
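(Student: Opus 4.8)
The plan is to reduce Theorem~\ref{lem:Time} to a superposition (linearity) statement for the loss gradient in its target argument, leaning on the column-wise orthogonality already established in Lemma~\ref{lem:Space}. First I would recall the single-class expression from Eq.~\ref{eq:lem1_eq}: the gradient $\nabla_W J(y_I,\hat{y})$ splits into a label-dependent piece $-\nabla_W y_I$ and a piece governed by the predicted logit $y_{\hat{y}}$. By Lemma~\ref{lem:Space} and Eq.~\ref{eq:Space_Grads}, the label-dependent piece is supported entirely on the $I$-th filter column $W_{L,I}$, while the prediction piece is supported on the column $W_{L,\hat{y}}$. Thus the $N$ introspective features $\{r_I\}_{I=1}^N$ occupy pairwise-disjoint (orthogonal) columns, up to the common prediction term.

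The central step is to write the all-ones target as $\mathbf{1} = \sum_{I=1}^N y_I$ over the orthonormal one-hot basis and to exploit that the (Taylor-approximated) cross-entropy gradient is homogeneous of degree one in the target. Concretely, writing $z = W_L^T y_{feat}$, $p = \operatorname{softmax}(z)$, and $t$ for the target, the per-logit gradient takes the form $\partial J/\partial z_j = p_j \big(\sum_k t_k\big) - t_j$, which carries no constant offset in $t$; hence $t \mapsto \nabla_{W_L} J(t,\hat{y})$ is a linear map, and the second-order approximation of Lemma~\ref{lem:Space} affects only the $p$-dependent factors, not this affine dependence on $t$. Applying linearity term by term then yields
\begin{equation*}
\nabla_{W_L} J(\mathbf{1}, \hat{y}) = \nabla_{W_L} J\Big(\textstyle\sum_{I=1}^N y_I,\ \hat{y}\Big) = \sum_{I=1}^N \nabla_{W_L} J(y_I, \hat{y}) = \sum_{I=1}^N r_I ,
\end{equation*}
so a single backward pass with the all-ones label reproduces the superposition of all $N$ one-hot gradients.

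I would then invoke orthogonality once more to argue that this superposition loses no information, which is what makes the two computations \emph{equivalent} rather than merely related. Because the label-dependent part of each $r_I$ lives in its own column $W_{L,I}$, reading column $I$ of $\nabla_{W_L} J(\mathbf{1},\hat{y})$ returns exactly the introspective component of $r_I$; meanwhile the shared prediction term accumulates in column $\hat{y}$ to $N-1$ copies (since the degenerate feature $r_{\hat{y}}$ with introspective class equal to the prediction nearly vanishes) and is separated by a single rescaling. This recovers each of the $N$ one-hot features from one all-ones evaluation, giving the $\mathcal{O}(N)$-to-$\mathcal{O}(1)$ reduction in backward passes claimed in Section~\ref{subsec:Complexity_Time}.

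The step I expect to be the main obstacle is justifying the exact homogeneity within the second-order Taylor approximation. The second term of Eq.~\ref{eq:lem1_eq}, $\nabla_W \log(1 + y_{\hat{y}}/2)$, \emph{appears} target-independent, but it was derived for a one-hot target with $\sum_k t_k = 1$; under $t = \mathbf{1}$ this factor in fact scales with $\sum_k t_k = N$. The hard part will therefore be to go back to the homogeneous form $p_j\big(\sum_k t_k\big) - t_j$, verify that the Taylor approximation introduces no target-independent constant that would break superposition, and reconcile the resulting $(N-1)$-fold accumulation in the prediction column so that the column-wise read-off recovers each $r_I$ faithfully. Care is also needed for the degenerate case $I = \hat{y}$, whose near-zero gradient must be accounted for when disentangling the accumulated prediction term from the individual introspective components.
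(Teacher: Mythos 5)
Your proposal is correct and takes essentially the same route as the paper's proof in Appendix~\ref{App:Lemma2}: both sum the per-class gradient decomposition inherited from Lemma~\ref{lem:Space}, observe that the label-dependent terms occupy orthogonal filter columns while the prediction term accumulates a factor of $N$, and then replace the sum of one-hot targets by $\mathbf{1}_N$ to equate the two sides (Eqs.~\ref{eq:J_Theorem} and~\ref{eq:J_Theorem_Final}). Your explicit check that the cross-entropy gradient is homogeneous in the target, $\partial J/\partial z_j = \left(\sum_k t_k\right)p_j - t_j$, makes rigorous the substitution step that the paper simply asserts, so the extra care you flag is a refinement of the same argument rather than a different one.
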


\begin{proof}Provided in Appendix~\ref{App:Lemma2}.
\end{proof}\vspace{-2mm}
The proof follows Lemma~\ref{lem:Space}. Theorem~\ref{lem:Time} states that backpropagating a vector of all ones ($\textbf{1}_N$) is equivalent to backpropagating $N$ one-hot vectors with ones at orthogonal positions. This reduces the time complexity from $\mathcal{O}(N)$ to a constant $\mathcal{O}(1)$ since we only require a single pass to backpropagate $\textbf{1}_N$. Hence, our introspective feature is given by,
\begin{equation}\label{eq:Final grads}
    r_x = \nabla_{W_{L}} J(\textbf{1}_N, \hat{y}), r_x \in \Re^{d_{L-1}\times N}, \textbf{1}_N = 1^{N\times 1}
\end{equation}
Note the LHS is now $r_x$ instead of $r_I$ from Eq.~\ref{eq:Space_Grads}. The final introspective feature is a matrix of the same size as $W_L$ extracted in $\mathcal{O}(1)$ time with a space complexity of $\mathcal{O}(d_{L-1}\times N)$. $r_x$ is vectorized and scaled between $[-1, 1]$ before being used in Sections~\ref{sec:Theory} and~\ref{sec:Exp} as introspective features. This procedure is illustrated in Fig.~\ref{fig:Process}

\begin{figure}[!t]
\begin{center}
\minipage{\textwidth}%
  \includegraphics[width=\linewidth]{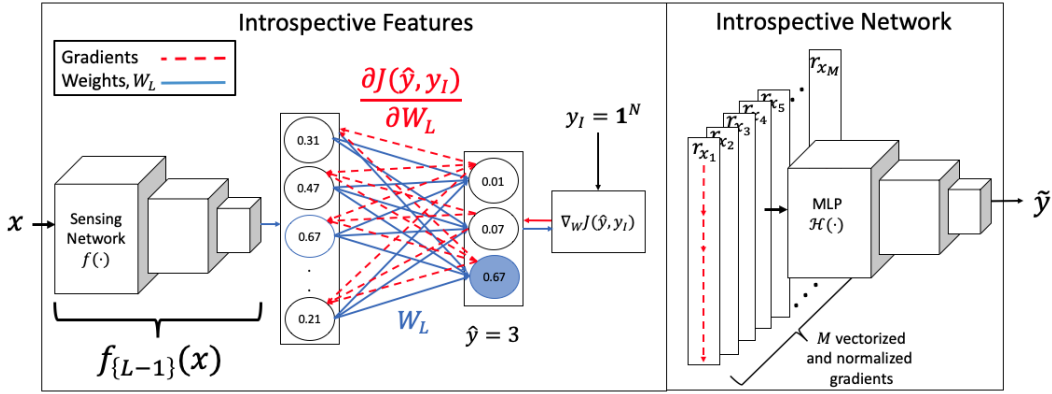}
\endminipage
\end{center}
\vspace{-2mm}
\caption{Introspective Learning process. Once $r_x$ for all images in the dataset are generated, an introspective network $\mathcal{H}(\cdot)$ is trained. During testing, the noisy image is passed through sensing network $f(\cdot)$, extraction module to generate $r_x$ and finally through the introspective network $\mathcal{H}(\cdot)$.}\label{fig:Process}
\end{figure}

\vspace{-2mm}
\section{Introspective Network}
\label{sec:Theory}
\vspace{-2mm}
Once $r_x$ are extracted using Eq.~\ref{eq:Final grads}, the introspective label $\Tilde{y}$ from Fig.~\ref{fig:Process} is given by $\Tilde{y} = \mathcal{H}(r_x)$ where $\mathcal{H}(\cdot)$ is an MLP. In this section, we analyze $\mathcal{H}(\cdot)$. From Fig.~\ref{fig:Process}, $f(\cdot)$ is any existing trained network used to obtain introspective features $r_x$. It is trained to predict the ground truth $y$ given any $x$. Based on the assumption that $\mathcal{H}(r_x) = \mathbb{E}(y | f(x))$ and hence expectation of $y - \mathcal{H}(r_x)$ is $0$, the loss function can be decomposed as,
\begin{equation}\label{eq:B-V-C_Intermediate}
    \mathbb{E}[(f(x) - y)^2] = \mathbb{E}[(f(x) - \mathcal{H}(r_x))^2)] + \mathbb{E}[(\mathcal{H}(r_x) - y)^2)].
\end{equation}
Note that since the goal is to predict $y$ given $x$, $\mathcal{H}(r_x) = \mathbb{E}(y | f(x))$ is a fair assumption to make. Substituting for $f(x)$ in Eq.~\ref{eq:B-V-C_Intermediate}, and using variance decomposition of $y$ onto $f(x)$, we have,

\begin{equation}\label{eq:B-V-C_Final}\vspace{-1.5mm}
    \mathbb{E}[(\hat{y} - y)^2] = \text{Var}(\hat{y}) - \text{Var}(\mathcal{H}(r_x)) + \mathbb{E}[(\mathcal{H}(r_x) - y)^2].
\end{equation}\vspace{-1.5mm}

This decomposition is adopted from structured calibration techniques. A full derivation is presented in~\cite{kuleshov2015calibrated}. The first term $\text{Var}(\hat{y})$ is the the variance in the prediction from $f(\cdot)$. This term is the precision of $f(\cdot)$ and is low for a well trained network. The third term is the MSE function between the introspective network $\mathcal{H}(\cdot)$ and the ground truth. It is minimized while training the $\mathcal{H}(\cdot)$ network. The second term is the variance of the network $\mathcal{H}(\cdot)$, given features $r_x$. Note that minimizing Eq.~\ref{eq:B-V-C_Final} can occur by maximizing $\text{Var}(\mathcal{H}(r_x))$. We use a fisher vector interpretation from~\cite{cohn1994neural} to analyze $\text{Var}(\mathcal{H}(r_x))$. If $\mathcal{H}(\cdot)$ is a linear layer with parameters $W_{\mathcal{H}}$, the $\text{Var}(\mathcal{H}(r_x))$ term reduces to $W_\mathcal{H}^T W_\mathcal{H}\times \text{Var}(r_x) \propto \text{Tr}( r_x^T\Sigma^{-1}r_x)$ where $\Sigma$ is the covariance matrix. $\Sigma$ is a gaussian approximation for the shape of the manifold. Generalizing it to a higher dimensional manifold and replacing $\Sigma$ with $F$, we have,
\vspace{-1.5mm}
\begin{align}
\text{Var}(\mathcal{H}(r_x)) &= \text{Tr}( r_x^T F^{-1}r_x), \label{eq:Fish_Initial}\\ 
\text{Var}(\mathcal{H}(r_x)) &= \sum^{N}_{j = 1} r_j^T F^{-1}r_j. \label{eq:Fish_Final} 
\end{align}
The RHS of Eq.~\ref{eq:Fish_Final} is a sum of fisher vectors taken across all possible labels.

\textbf{When do introspective networks provide robustness?} We use Eq.~\ref{eq:Fish_Final} to analyze introspective learning usage. Specifically, we consider two cases: When input $x$ is taken from the training distribution $\mathcal{X}$, and when it is taken from a noisy distribution $\mathcal{X'}$. 
\begin{enumerate}
    \item When a sample $x \in \mathcal{X}$ is provided to a network $f(\cdot)$ trained on $\mathcal{X}$, all $r_j, j \neq \hat{y}$ in Eq.~\ref{eq:Fish_Final} tend to $0$. The RHS reduces to $r_{\hat{y}}^T F^{-1}r_{\hat{y}}$. $r_{\hat{y}}$ is a function of $f(x)$ only and hence adds no new information to the framework. The results of $\mathcal{H}(\cdot)$ remain the same as $f(\cdot)$. In other words, given a trained ResNet-18 on CIFAR-10, the results of feed-forward learning will be the same as introspective learning on CIFAR-10 testset.
    \item When a new sample $x' \not\in \mathcal{X}$ is provided to a network $f(\cdot)$ trained on $\mathcal{X}$, a fisher vector based projection across labels is more descriptive compared to a feed-forward approach. The $N$ gradients in Eq.~\ref{eq:Fish_Final} add new information based on how the network needs to change the manifold shape $F$ to accommodate the introspective gradients. Hence, given a distorted version of CIFAR-10 testset, our proposed introspective learning generalizes with a higher accuracy while providing calibrated outputs from Eq.~\ref{eq:B-V-C_Final}. We empirically illustrate these claims in Section~\ref{sec:Exp}. We motivate other applications including active learning, Out-Of-Distribution (OOD) detection, uncertainty estimation, and Image Quality Assessment (IQA) that are dependent on generalizability and calibration in Section~\ref{sec:Related}.
\end{enumerate}

\section{Related Works for Considered Applications}
\label{sec:Related}
\paragraph{Augmentations and Robustness} The considered $r_x$ features from Eq.~\ref{eq:Final grads} can be considered as feature augmentations. Augmentations, including SimCLR~\cite{chen2020simple}, Augmix~\cite{hendrycks2019augmix}, adversarial augmentation~\cite{hendrycks2019benchmarking}, and noise augmentations~\cite{vasiljevic2016examining} have shown to increase robustness of neural networks. We use introspection on top of non-augmented and augmented (Section~\ref{sec:Exp}) networks and show that our proposed two-stage framework increases the robustness to create generalizable and calibrated inferences which aids active learning and out-of-distribution (OOD) detection. The same framework that robustly recognizes images despite noise can also detect noise to make an out-of-distribution detection.
\vspace{-3mm}
\paragraph{Confidence and Uncertainty}
The existence of adversarial images~\cite{goodfellow2014explaining} heuristically decouples the probability of neural network predictions from confidence and uncertainty. A number of works including~\cite{sensoy2018evidential} and~\cite{mackay1995probable} use bayesian formulation to provide uncertainty. However, in downstream tasks like active learning and Out-Of-Distribution (OOD) detection applications, existing state-of-the-art methods utilize softmax probability as confidences. This is because of the simplicity and ease of numerical computation of softmax. In active learning, uncertainty is quantified by the entropy~\cite{wang2014new}, least confidence~\cite{wang2014new}, or maximum margin~\cite{roth2006margin} of predicted logits, or through extracted features in BADGE~\cite{ash2019deep}, and BALD~\cite{gal2017deep}. In OOD detection,~\cite{hendrycks2016baseline} proposes Maximum Softmax Probability (MSP) as a baseline method by creating a threshold function on the softmax output.~\cite{liang2017enhancing} proposes ODIN and improves on MSP by calibrating the network's softmax probability using temperature scaling~\cite{guo2017calibration}. In this paper, we show that the proposed introspective features are better calibrated than their feed-forward counterparts. Hence existing methods in active learning and OOD detection have a superior performance when using $\mathcal{H}(\cdot)$ to make predictions.
\vspace{-3mm}
\paragraph{Human Introspection} We are unaware of any direct application that tests visual human introspection. In its absence, we choose the application of Full-Reference Image Quality Assessment (FR-IQA) to connect machine vision with human vision. The goal in FR-IQA is to objectively estimate the subjective quality of an image. Humans are shown a pristine image along with a distorted image and asked to score the quality of the distorted image~\cite{sheikh2006statistical}. This requires reflection on the part of the observers. We take an existing algorithm~\cite{temel2016unique} and show that introspecting on top of this IQA technique brings its assessed scores closer to human scores.
\section{Experiments}
\label{sec:Exp}
Across all applications except in Ablation studies in Table~\ref{table:H_ablation}, we use a 3-layered MLP with sigmoid activations as $\mathcal{H} (\cdot)$. The structure is presented in Appendix~\ref{App:H}. We first define robustness and calibration in the context of this paper.
\vspace{-3mm}
\paragraph{Robustness}In this paper, without loss of consistency with related works, we say that the network trained on distribution $\mathcal{X}$ is robust if it correctly predicts on a shifted distribution $\mathcal{X'}$. The difference in data distributions can be because of data acquisition setups, environmental conditions, distortions among others. We use CIFAR-10 for $\mathcal{X}$ and two distortion datasets - CIFAR-10C~\cite{hendrycks2019benchmarking} and CIFAR-10-CURE~\cite{temel2018cure} as $\mathcal{X'}$. Generalization is measured through performance accuracy.
\vspace{-3mm}
\paragraph{Calibration}Given a data distribution $x \in \mathcal{X}$, belonging to any of $y \in [1, N]$, a neural network provides two outputs - the decision $\hat{y}$ and the confidence associated with $\hat{y}$, given by $\hat{p}$. Let $p$ be the true probability empirically estimated as $p = {\hat{p}_i}, \forall i \in [1, M]$. Then calibration is given by~\cite{guo2017calibration},
\begin{equation}\label{eq:calibration}
    \mathbb{P}(y = \hat{y}|p = \hat{p}) = p
\end{equation}
Calibration measures the difference between the confidence levels and the prediction accuracy. To showcase calibration we use the metric of Expected Calibration Error (ECE) as described in~\cite{guo2017calibration}. The network predictions are placed in $10$ separate bins based on their prediction confidences. Ideally, the accuracy equals the mid-point of confidence bins. The difference between accuracy and mid-point of bins, across bins is measured by ECE. Lower the ECE, better calibrated is the network. 
\vspace{-3mm}
\paragraph{Datasets and networks}CIFAR-10C consists of $950,000$ images whose purpose is to evaluate the robustness of networks trained on original CIFAR-10 trainset. CIFAR-10C perturbs the CIFAR-10 testset using $19$ distortions in $5$ progressive levels. Hence, there are $95$ separate $\mathcal{X}'$ distributions to test on with each $\mathcal{X}'$ consisting of $10000$ images. Note that we are not using any distortions or data from CIFAR-10C as a validation split during training. The authors in~\cite{temel2018cure} provide realistic distortions that they used to benchmark real-world recognition applications including Amazon Rekognition and Microsoft Azure. We use these distortions to perturb the test set of CIFAR-10. There are $6$ distortions, each with $5$ progressive levels. Of these $6$ distortions - Salt and Pepper, Over Exposure, and Under Exposure noises are new compared to CIFAR-10C. We train four ResNet architectures - ResNet-18, 34, 50, and 101~\cite{he2016deep}. All four ResNets are evaluated as sensing networks $f(\cdot)$. The training procedure and hyperparameters are presented in Appendix~\ref{App:H}.
\vspace{-3mm}
\paragraph{Testing on CIFAR-10 testset}The trained networks are tested on CIFAR-10 testset with accuracies $91.02\%$, $93.01\%$, $93.09\%$, and $93.11\%$ respectively. Next we extract $r_x$ on all training and testing images in CIFAR-10. $\mathcal{H}(\cdot)$ is trained using $r_x$ from the trainset using the same procedure as $f(\cdot)$. When tested on $r_x$ of the testset, the accuracy for ResNets-18,34,50,101 is $90.93\%$, $92.92\%$, $93.17\%$, and $93.03\%$. Note that this is similar to the feed-forward results. The average ECE of all feed-forward and introspective networks is $0.04$. Hence, when the test distribution is the same as training distribution there is no change in performance.
\vspace{-1.5mm}
\begin{figure}[!t]
\begin{center}
\minipage{\textwidth}%
  \includegraphics[width=\linewidth]{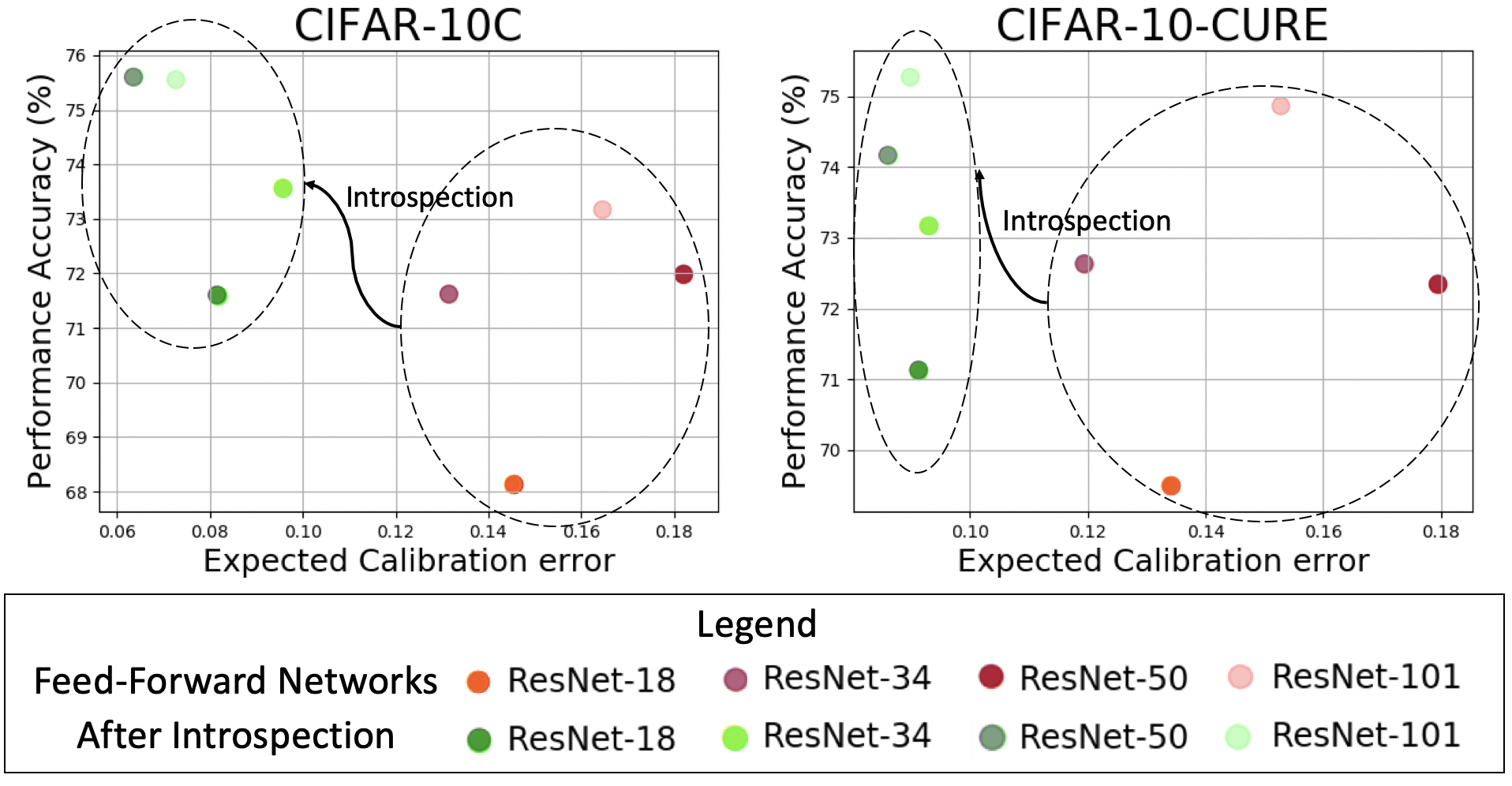}
\endminipage
\caption{Scatter plot with performance accuracy vs expected calibration error. Ideally, networks are in top left. Introspectivity increases performance accuracy while decreasing calibration error.}\label{fig:Recognition_All}\vspace{-5mm}
\end{center}
\end{figure}
\vspace{-1.5mm}
\begin{figure}
\begin{center}
\minipage{\textwidth}%
  \includegraphics[width=\linewidth]{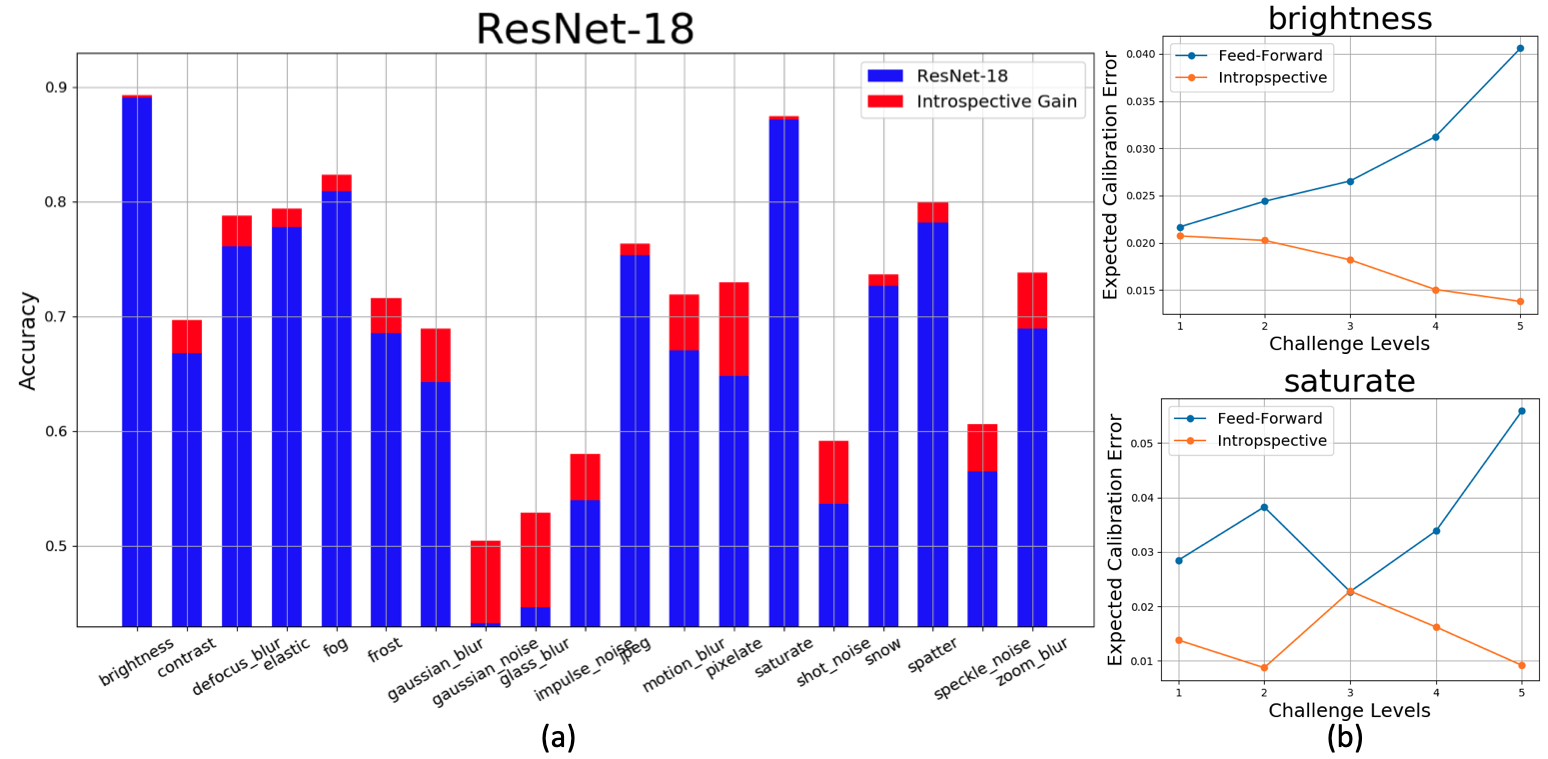}
\endminipage
\caption{(a) ResNet-18 on CIFAR-10C. (b) Expected calibration error across 5 challenge levels in brightness and saturate distortions. Note that both these distortions do not affect the performance of the network and their feed-forward accuracy is high. The improvement in accuracy is statistically insignificant. However, introspection decreases the ECE across challenge levels.}\label{fig:Recognition_Fine}\vspace{-5mm}
\end{center}
\end{figure}
\paragraph{Testing on CIFAR-10C and CIFAR-10-CURE}The results of all networks averaged across distortions in both the datasets are shown in Fig.~\ref{fig:Recognition_All}. Note that in each case, there is a shift leftward and upward indicating that the performance improves while the calibration error decreases. In the larger CIFAR-10C dataset, the introspective ResNet-18 performs similar to ResNets-34 and 50 in terms of accuracy while beating them both in calibration. A more fine-grained analysis is shown in Fig.~\ref{fig:Recognition_Fine} for ResNet-18. The blue bars in Fig.~\ref{fig:Recognition_Fine}a) represent the feed-forward accuracy. The red bars are the introspective accuracy gains over the feed-forward accuracy. Among $7$ of the $19$ distortions, the accuracy gains are over $5\%$. In Appendix~\ref{subsec:Lvl_Wise}, we see that the gains are higher when the distortions are higher. Introspection performs well on blur-like distortions while struggling with distortions that disrupt the lower level characteristics of the image like brightness, contrast, and saturate. This can be attributed to the fact that $r_x$ are derived from the last layer of $f(\cdot)$ and are missing low-level statistics that are filtered out by network in the initial layers. However, in Fig.~\ref{fig:Recognition_Fine}b), we show ECE for brightness and saturate distortions across all $5$ distortion levels - higher the level, more is the distortion affecting $\mathcal{X'}$. It can be seen that while the ECE for feed-forward networks increases across levels, the ECE for introspective networks decrease. Hence, even when there are no accuracy gains to be had, introspection helps in calibration. 
\vspace{-3mm}
\paragraph{Plug-in results of Introspection}Note that there are a number of techniques
proposed to alleviate a neural network's robustness challenges against distortions.
\begin{wraptable}{l}{6cm}
\tiny
\caption{Introspecting on top of existing robustness techniques.}
\vspace{+1mm} 
\begin{sc}

\begin{tabular}{l c r} 
 \toprule
 Methods & & Accuracy  \\ 
 \midrule
 ResNet-18 & Feed-Forward & $67.89\%$ \\ 
 & Introspective & $\textbf{71.4}\%$ \\
 \midrule
 Denoising & Feed-Forward &$65.02\%$  \\
 & Introspective & $\textbf{68.86}\%$ \\
 \midrule
 Adversarial Train~\cite{hendrycks2019benchmarking} & Feed-Forward & $68.02\%$ \\
 & Introspective & $\textbf{70.86}\%$ \\
 \midrule
 SimCLR~\cite{chen2020simple} & Feed-Forward & $70.28\%$  \\ 
 & Introspective & $\textbf{73.32}\%$ \\
 \midrule
 Augment Noise~\cite{vasiljevic2016examining} & Feed-Forward & $76.86\%$  \\
 & Introspective & $\textbf{77.98}\%$ \\
 \midrule
 Augmix~\cite{hendrycks2019augmix} & Feed-Forward & $89.85\%$ \\
 & Introspective & $\textbf{89.89}\%$ \\
 \bottomrule
\end{tabular}
\end{sc}
\label{tab:Results_Compare_All}\vspace{-3mm}
\end{wraptable}
The authors in~\cite{vasiljevic2016examining} show that finetuning VGG-16 using 
blurry training images increases the performance of classification under blurry conditions.~\cite{Temel2017_CURETSR} propose utilizing distorted virtual images to boost performance accuracy. The authors in~\cite{hendrycks2019benchmarking} use adversarial images to augment the training data. All these works require knowledge of distortion or large amounts of new data during training. Our proposed method can infer introspectively on top of any existing $f(\cdot)$ enhanced using existing methods. In Table~\ref{tab:Results_Compare_All}, we show results of introspection as a \emph{plug-in} approach on top of existing techniques. In all methods, introspection outperforms its feed-forward counterpart. While introspecting on top of Augmix, trained on WideResNet~\cite{hendrycks2019augmix} provides insignificant recognition accuracy gains, introspection reduces ECE of Augmix network by $43.33\%$. In Appendix~\ref{App:Gains}, we show performance on top
of~\cite{vasiljevic2016examining} and~\cite{hendrycks2019benchmarking} of $6.8\%$ on Level 5 distortions. In Appendix~\ref{App:SimCLR}, we analyze SimCLR and show that introspecting on the self supervised features increases its CIFAR-10C performance by about $6\%$ on ResNet-101. A number of ablation studies including analysis of structure of $\mathcal{H}(\cdot)$, loss functions, distortion levels on performance accuracy and ECE are shown in Appendix~\ref{App:Ablation}. Moreover, we examine introspection when $\mathcal{X'}$ is domain shifted data from Office~\cite{saenko2010adapting} dataset in Appendix~\ref{App:Office}. 
\vspace{-3mm}
\paragraph{Downstream Applications} We consider four downstream applications: Active Learning, Out-of-Distribution detection, Uncertainty estimation, and Image Quality Assessment (IQA) to demonstrate the validity of introspection. Similar to recognition experiments, in all considered applications there is a distributional difference between train and testset. We show results that conform to Eq.~\ref{eq:Fish_Final}. We show statistically significant introspective IQA results in Appendix~\ref{app:IQA} and Table~\ref{tab:iqa_results_subset}, and uncertainty estimation results in Appendix~\ref{App:Uncertainty}.

\paragraph{Active Learning}
\label{subsec:AL}
The goal in active learning is to decrease the test error in a model by choosing the \textit{best} samples from a large pool of unlabeled data to annotate and train the model. A number of strategies are proposed to query the \textit{best} samples. A full review of active learning and query strategies are given in~\cite{settles2009active}. Existing active learning strategies define \textit{best} samples to annotate as those samples that the model is either most uncertain about. This uncertainty is quantified by either entropy~\cite{wang2014new}, least confidence~\cite{wang2014new}, maximum margin~\cite{roth2006margin}, or through extracted features in BADGE~\cite{ash2019deep}, and BALD~\cite{gal2017deep}. We show the results of ResNet-18 and 34 architecture in Table~\ref{tab:AL}. Implementations of all query strategies in Table~\ref{tab:AL} are taken from the codebase of~\cite{ash2019deep} and reported as Feed-Forward results. Note that the query strategies act on $f(\cdot)$ to sample images at every round. Instead of sampling on $f(\cdot)$, all query strategies sample using $\mathcal{H}(\cdot)$ in the Introspective results. The training, testing, and all strategies are the same as Feed-Forward from~\cite{ash2019deep}. Doing so we find similar results as recognition - on the original testset the active learning results are the same while there is a gain across strategies on Gaussian noise testset from CIFAR-10C. Note that the results shown are averaged over 20 rounds with a query batch size of a 1000 and initial random choice - which were kept same for $f(\cdot)$ and $\mathcal{H}(\cdot)$ - of 100. Further details, plots, and variances are shown in Appendix~\ref{App:AL}.
\vspace{-3mm}
\paragraph{Out-of-distribution Detection}
\label{subsec:OOD}
The goal of Out-Of-Distribution (OOD) detection is to detect those samples that are drawn from a distribution $\mathcal{X'} \neq \mathcal{X}$ given a fully trained $f(\cdot)$. A number of techniques are proposed to detect out-of-distribution samples. The authors in~\cite{hendrycks2016baseline} propose Maximum Softmax Probability (MSP) as a baseline method by creating a threshold function on the softmax output. The authors in~\cite{liang2017enhancing} propose ODIN and improved on MSP by calibrating the network's softmax probability using temperature scaling~\cite{guo2017calibration}. In this paper, we illustrate that applying existing methods when applied on $\mathcal{H}(\cdot)$, their detection performance is greater than if they were applied on the feed-forward $f(\cdot)$. The code for OOD detection techniques are taken from~\cite{chen2020robust} along with all hyperparameters and the training regimen for their reported DenseNet~\cite{huang2017densely} architecture. The temperature scaling coefficient for ODIN is set to $1000$. Note that we do not use temperature scaling on $\mathcal{H}$, to illustrate the effectiveness of our method. We use three established metrics to evaluate OOD detection - False Positive Rate (FPR) at 95\% True Positive Rate (TPR), Detection error, and AUROC. Ideally, AUROC values for a given method is high while the other two metrics are low. We use CIFAR-10 as our in-distribution dataset and use four OOD datasets - SVHN~\cite{netzer2011reading}, Describable Textures Dataset~\cite{cimpoi14describing}, Places 365~\cite{zhou2017places}, and LSUN~\cite{yu2015lsun}. The results are presented in Table~\ref{tab:OOD}. Note that among the four datasets, textures and SVHN are \textit{more} out-of-distribution from CIFAR-10 than the natural image datasets of Places365 and LSUN. The results of the introspective network is highest on Textures DTD dataset and gets progressively worse among the natural image datasets. Further analysis on networks and methods, along with their training regimen is provided in Appendix~\ref{App:OOD}.

\begin{table}
  \tiny
  \begin{tabularx}{\linewidth}{CC}
  \caption{Recognition accuracy of Active Learning strategies.}
  \label{tab:AL}
  \begin{tabular}{lccccr}
    \toprule
    Methods &  Architecture    & \multicolumn{2}{c}{Original Testset}     & \multicolumn{2}{c}{Gaussian Noise} \\
    \midrule
    & & R-18 & R-34 & R-18 & R-34 \\
    \cmidrule(r){3-6}
    \multirow{2}{*}{Entropy~\cite{wang2014new}}  & Feed-Forward  & 0.365 & 0.358 & 0.244 & 0.249 \\
    & Introspective & 0.365 & 0.359 &  \textbf{0.258} & \textbf{0.255}\\
    \midrule
    \multirow{2}{*}{Least~\cite{wang2014new}}  & Feed-Forward  & 0.371 & 0.359 & 0.252 & 0.25  \\
    & Introspective & 0.373 & 0.362 & \textbf{0.264} & \textbf{0.26} \\
    \midrule
    \multirow{2}{*}{Margin~\cite{roth2006margin}}  & Feed-Forward  & 0.38  & 0.369 & 0.251 & 0.253  \\
    & Introspective & 0.381 &  0.373 & \textbf{0.265} & \textbf{0.263}\\
    \midrule
    \multirow{2}{*}{BALD~\cite{gal2017deep}}  & Feed-Forward  & 0.393  & 0.368 & 0.26 & 0.253  \\
    & Introspective & 0.396 & 0.375 & \textbf{0.273} & \textbf{0.263} \\
    \midrule
    \multirow{2}{*}{BADGE~\cite{ash2019deep}}  & Feed-Forward  & 0.388 & 0.37 &  0.25 & 0.247 \\
    & Introspective & 0.39 & 0.37 & \textbf{0.265} & 0.\textbf{260}\\
    \bottomrule
  \end{tabular}
  &
  \caption{Out-of-distribution Detection of existing techniques compared between feed-forward and introspective networks.}
  \label{tab:OOD}
  \begin{tabular}{p{0.6cm}p{0.8cm}ccr}
    \toprule
    Methods &  OOD    &  FPR & Detection & AUROC \\
    & Datasets & (95\% at TPR) & Error & \\
    & & $\downarrow$ & $\downarrow$ & $\uparrow$ \\
    \midrule
    & & \multicolumn{3}{c}{Feed-Forward/Introspective} \\
    \cmidrule(r){3-5}
    \multirow{3}{*}{MSP~\cite{hendrycks2016baseline}}  & Textures  & 58.74/\textbf{19.66} & 18.04/\textbf{7.49}  & 88.56/\textbf{97.79} \\
    & SVHN & 61.41/\textbf{51.27} & 16.92/\textbf{15.67} & 89.39/\textbf{91.2}  \\
    & Places365 & 58.04/\textbf{54.43} & 17.01/\textbf{15.07} & 89.39/\textbf{91.3}  \\
    & LSUN-C & \textbf{27.95}/27.5 & \textbf{9.42}/10.29 & \textbf{96.07}/95.73  \\
    \midrule
    \multirow{3}{*}{ODIN~\cite{liang2017enhancing}}  & Textures  & 52.3/\textbf{9.31} & 22.17/\textbf{6.12} & 84.91/\textbf{91.9}  \\
    & SVHN & 66.81/\textbf{48.52} & 23.51/\textbf{15.86} & 83.52/\textbf{91.07}  \\
    & Places365 & \textbf{42.21}/51.87 & 16.23/\textbf{15.71} & \textbf{91.06}/90.95  \\
    & LSUN-C & \textbf{6.59}/23.66 & \textbf{5.54}/10.2 & \textbf{98.74}/ 95.87 \\
    \bottomrule
  \end{tabular}
  \end{tabularx}

\end{table}

\vspace{-1.5mm}
\section{Discussion and Conclusion}
\label{sec:Discussion}
\vspace{-1.5mm}
\paragraph{Limitations and future work}The paper illustrates the benefits of utilizing the change in model parameters as a measure of model introspection. In Section~\ref{subsec:Complexity_Time}, we accelerate the time complexity to $\mathcal{O}(1)$. However, the space complexity is still dependent on $N$. The paper uses an MLP for $\mathcal{H}(\cdot)$ and constructs $r_x$ by vectorizing extracted gradients. For datasets with large $N$, usage of $r_x$ as a vector is prohibitive. Hence, a required future work is to provide a method of combining all $N$ gradients without vectorization. Also, our implementation uses serial gradient extraction across images. This is non-ideal since the available GPU resources are not fully utilized. A parallel implementation with per-sample gradient extraction~\cite{goodfellow2015efficient} is a pertinent acceleration technique for the future. 
\vspace{-1.5mm}
\paragraph{Broader and Societal Impact}In his seminal book in 2011~\cite{kahneman2011thinking}, Daniel Kahneman outlines two systems of thought and reasoning in humans - a fast and instinctive `system 1' that heuristically associates sensed patterns followed by a more deliberate and slower `system 2' that examines and analyzes the data in context of intrinsic biases. Our framework derives its intuition based on these two systems of reasoning. The introspective explanations can serve to examine the intrinsic notions and biases that a network uses to categorize data. Note that the network $\mathcal{H}(\cdot)$ obtains its introspective answers through $f(\cdot)$. Hence, similar to the `system 2' reasoning in humans, any internal bias present in $f(\cdot)$ only gets strengthened in $\mathcal{H}(\cdot)$ through confirmation bias. The framework will benefit from a human intervention between $f(\cdot)$ and $\mathcal{H}(\cdot)$ in sensitive applications. One way would be to ask counterfactual questions by providing an established counterfactual and asking the network to reflect based on that. While the introspective framework will remain the same, the features will change.
\vspace{-1.5mm}
\paragraph{Conclusion}We introduce the concept of introspection in neural networks as two separate stages in a network’s decision process - the first is making a quick assessment based on sensed patterns in data and the second is reflecting on that assessment based on all possible decisions that could have been taken and making a final decision based on this reflection. We show that doing so increases the generalization performance of neural networks as measured against distributionally shifted data while reducing the calibration error of neural networks. Existing state-of-the-art methods in downstream tasks like active learning and out-of-distribution detection perform better in an introspective setting compared to a feed-forward setting especially when the distributional difference is high. 

\bibliography{references}
\bibliographystyle{IEEEbib}

\section*{Checklist}

\begin{enumerate}

\item For all authors...
\begin{enumerate}
  \item Do the main claims made in the abstract and introduction accurately reflect the paper's contributions and scope?
    \answerYes{}
  \item Did you describe the limitations of your work?
    \answerYes{}. See Section~\ref{sec:Discussion}.
  \item Did you discuss any potential negative societal impacts of your work?
    \answerYes{}. See Section~\ref{sec:Discussion}.
  \item Have you read the ethics review guidelines and ensured that your paper conforms to them?
    \answerYes{}
\end{enumerate}

\item If you are including theoretical results...
\begin{enumerate}
  \item Did you state the full set of assumptions of all theoretical results?
    \answerYes{}
  \item Did you include complete proofs of all theoretical results?
    \answerYes{}. See Appendix~\ref{App:Proofs}
\end{enumerate}

\item If you ran experiments...
\begin{enumerate}
  \item Did you include the code, data, and instructions needed to reproduce the main experimental results (either in the supplemental material or as a URL)?
    \answerNo{}. The code will be made public upon acceptance.
  \item Did you specify all the training details (e.g., data splits, hyperparameters, how they were chosen)?
    \answerYes{}. See Section~\ref{sec:Exp} and Appendix~\ref{App:H}
        \item Did you report error bars (e.g., with respect to the random seed after running experiments multiple times)?
    \answerYes{}. See Appendix~\ref{sec:Downstream}
        \item Did you include the total amount of compute and the type of resources used (e.g., type of GPUs, internal cluster, or cloud provider)?\answerYes{} in Appendix~\ref{App:Reproducability}
\end{enumerate}

\item If you are using existing assets (e.g., code, data, models) or curating/releasing new assets...
\begin{enumerate}
  \item If your work uses existing assets, did you cite the creators?
    \answerYes{}. See Appendix~\ref{App:Reproducability}
  \item Did you mention the license of the assets?
    \answerYes{}. See Appendix~\ref{App:Reproducability}
  \item Did you include any new assets either in the supplemental material or as a URL?
    \answerNo{}
  \item Did you discuss whether and how consent was obtained from people whose data you're using/curating?
    \answerNo{}
  \item Did you discuss whether the data you are using/curating contains personally identifiable information or offensive content?
    \answerNo{}
\end{enumerate}

\item If you used crowdsourcing or conducted research with human subjects...
\begin{enumerate}
  \item Did you include the full text of instructions given to participants and screenshots, if applicable?
    \answerNA{}
  \item Did you describe any potential participant risks, with links to Institutional Review Board (IRB) approvals, if applicable?
    \answerNA{}
  \item Did you include the estimated hourly wage paid to participants and the total amount spent on participant compensation?
    \answerNA{}
\end{enumerate}

\end{enumerate}


\newpage
\appendix

\appendix

\section{Appendix : Introspection, Reasoning, and Explanations}
\label{App:Abductive}
Introspection was formalized by~\cite{wundt1874grundzuge} as a field in psychology to understand the concepts of memory, feeling, and volition~\cite{sep-introspection}. The primary focus of introspection is in reflecting on oneself through directed questions. While the directed questions are an open field of study in psychology, we use reasoning as a means of questions in this paper. Specifically, abductive reasoning. Abductive reasoning was introduced by the philosopher Charles Sanders Peirce~\cite{peirce1931collected}, who saw abduction as a reasoning process from effect to cause~\cite{paul1993approaches}. An abductive reasoning framework creates a hypothesis and tests its validity without considering the cause. From the perspective of introspection, a hypothesis can be considered as an answer to one of the three following questions: a correlation \emph{`Why P?'} question, a counterfactual \emph{`What if?'} question, and a contrastive \emph{`Why P, rather than Q?'} question. Here $P$ is the prediction and $Q$ is any contrast class. Both the correlation and counterfactual questions require active interventions for answers. These questions try to assess the causality of some endogenous or exogenous variable and require interventions that are long, complex, and sometimes incomplete~\cite{prabhushankar2021contrastive}. However, introspection is the assessment of ones own notions rather than an external variable. Hence, a contrastive question of the form \emph{`Why P, rather than Q?'} lends itself as the directed question for introspection. Here $Q$ is the introspective class. It has the additional advantage that the network $f(\cdot)$ serves as the knowledge base of notions. All reflection images from~\ref{fig:Concept}, Fig.~\ref{fig:Visuals}, and Fig.~\ref{fig:Sparse} are contrastive. We describe the generation process of these \emph{post-hoc} explanations.

\paragraph{Introspective Feature Visualization}
\label{subsec:Visuals}
We modify Grad-CAM~\cite{selvaraju2017grad} to visualize $r_j$ from Eq.~\ref{eq:Grads}. Grad-CAM visually justifies the decision made by $f(\cdot)$ by highlighting features that lead to $\hat{y}$. It does so by backpropagating the logit associated with the prediction, $\hat{y}$. The resulting gradients at every feature map are global average pooled and used as importance scores. The importance scores multiply the activations of the final convolutional layer and the resultant map is the Grad-CAM visualization. Hence, gradients highlight the activation areas that maximally lead to the prediction $\hat{y}$. In Fig.~\ref{fig:Concept}, given a spoonbill image $x$ and a ImageNet-pretrained~\cite{deng2009imagenet} VGG-16 network, the sensing visualization shown is Grad-CAM. Grad-CAM indicates that the pink and round body, and straight beak are the reasons for the decision. Instead of backpropagating the $\hat{y}$ logit, we backpropagate $J(y_I, \hat{y})$ in the Grad-CAM framework. The gradients represent introspective features and are used as importance scores. It can be seen that they visually highlight the explanations to \textit{`Why $\hat{y}$, rather than $y_I$'.} In Fig.~\ref{fig:Concept}, the network highlights the neck of the spoonbill to indicate that since an S-shaped neck is not observed, $x$ cannot be a flamingo. Similarly, the body of the spoonbill is highlighted when asked why $x$ is not a crane since cranes have white feathers while spoonbills are pink. Two more examples are shown in Fig.~\ref{fig:Visuals}. In the first row, a VGG-16 architecture is trained on Stanford Cars dataset~\cite{KrauseStarkDengFei-Fei_3DRR2013}. Given a Bugatti convertible image, Grad-CAM highlights the bonnet as the classifying factor. An introspective question of why it cannot be a bugatti coupe is answered by highlighting the open top of the convertible. The entire car is highlighted to differentiate the bugatti convertible from a Volvo. In the second row, we explore visual explanations in computed seismic images using LANDMASS dataset~\cite{alaudah2018structure}. A ResNet-18 architecture using the procedure from~\cite{shafiq2018towards} is trained. The dataset has four geological features as classes - faults, salt domes, horizons, and chaotic regions. Given a fault image in Fig.~\ref{fig:Visuals}, Grad-CAM highlights the regions where the faults are clearly visible as fractures between rocks. However, these regions resemble salt domes as shown in the representative image. The introspective answer of why $x$ is not predicted as a salt dome tracks a fault instead of highlighting a general region that also resembles a salt dome. Note that no representative images are required to obtain introspective visualizations. The gradients introspect based on notions of classes in network parameters.

\paragraph{Biological plausibility of introspection in recognition} Recognition is fast and mostly a feed-forward process. However, when there is uncertainty involved - either due to distributional shift or noise - we tend to reason about our decisions. Human visual system detects salient portions of an image and attends to them in a feed-forward process. An alternative perspective to this is expectancy-mismatch - the idea that HVS attends to those features that deviate from expectations~\cite{sun2020implicit}. We simulate this via introspection. By asking \emph{`Why P, rather than Q?'}, we ask the network to examine its expectations and describe the mismatches. This is seen as the lack of S-shaped neck in spoonbill in Fig.~\ref{fig:Concept}. By interpreting introspective features as hypotheses that answer contrastive questions, we convert the biologically feed-forward process into a reflection process. Moreover, our results also support this - when train and testsets are from the same distribution, there is no change in results. However, when there is a distributional difference, we notice the gains for introspection - on CIFAR-10C, CIFAR-10-CURE, active learning, OOD and IQA experiments. Moreover, higher the distributional difference, larger is the introspective gain as shown in Fig.~\ref{fig:Level-wise}.

\begin{figure}[!t]
\begin{center}
\minipage{\textwidth}%
  \includegraphics[width=\linewidth]{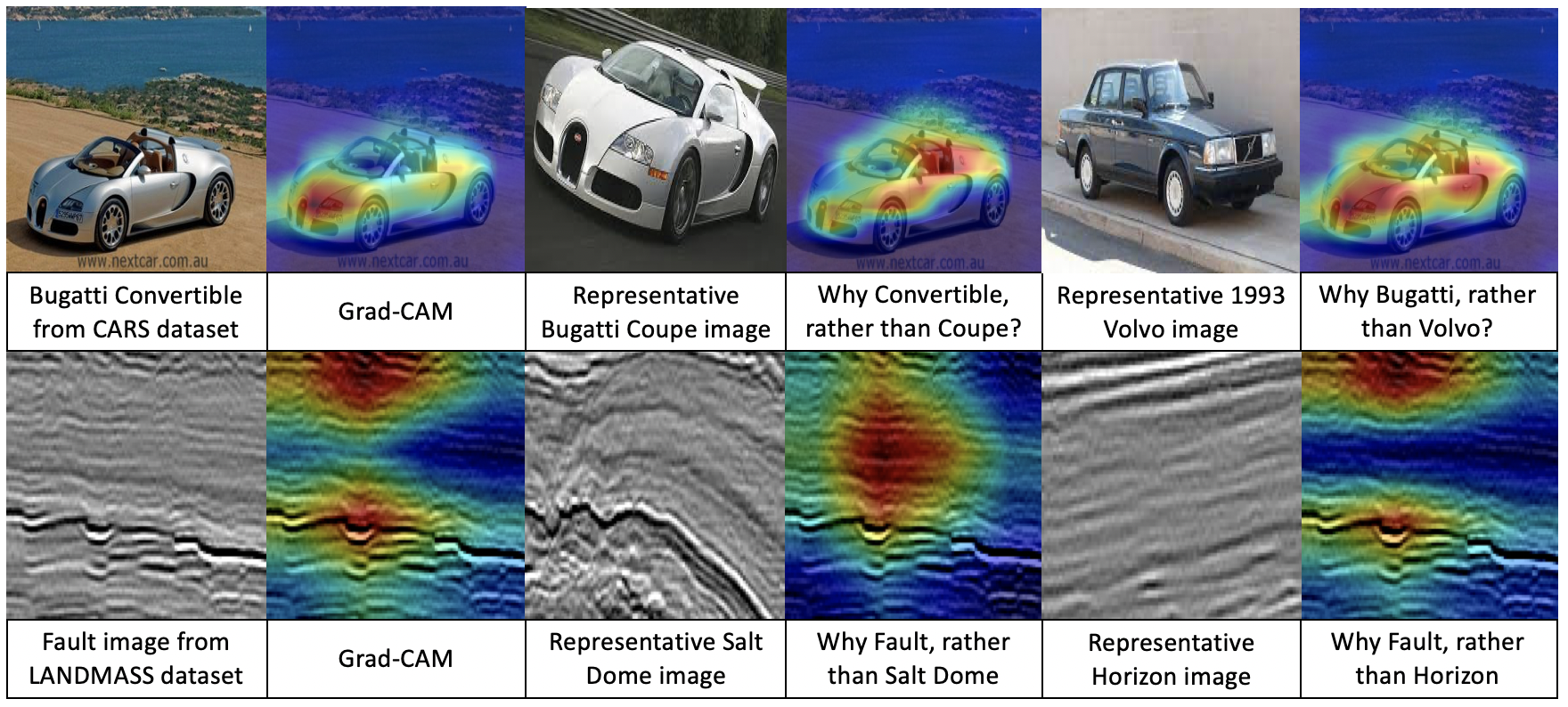}
\endminipage
\caption{Introspective feature visualizations. The images in the leftmost column are the input $x$. The representative images are for illustrative purposes and are not used to extract features.}\label{fig:Visuals}\vspace{-5mm}
\end{center}
\end{figure}

\paragraph{A broader view on Introspection} In this paper, we limit introspection as answers to \emph{`Why P, rather than Q?'} questions. We limit $P$ to be the predictions made by networks. Hence, we are left with $N$ questions to answer. However, creative abduction calls for asking questions of the form \emph{`Why P, rather than Q1 and Q2?'}. Such $Qs$ can extend to all $N$ classes. Hence, introspective labels can be the powerset of all one-hot labels - $2^N$. Moreover, the prediction itself can be made to change by intervening within data leading to questions of the form \emph{`Why Q, rather than P?'}. These include counterfactual questions of the form \emph{`What if?'} when considered from the output perspective. Hence, for $N$ classes, there can be $N\times 2^N$ introspective questions. Hence, the proposed features are only one possible feature set when considering introspection. However, we posit that all these features are a function of the data and the model, thereby making gradients an essential feature set while considering introspection and this paper provides intuitions as to their applicability.

\section{Appendix : Proofs}
\label{App:Proofs}
\subsection{Proof for Lemma~\ref{lem:Space}}
\label{App:Lemma1}

We start by assuming $J(\cdot)$ is a cross-entropy loss. $J(y_I, \hat{y}), I\in [1,N]$ can also be written as,
\begin{equation}\label{eq:CE}
J(y_I, \hat{y}) = -y_{\hat{y}} + \text{log} \sum_{j=1}^{N} e^{y_j}, \text{ where } \hat{y} = f(x), \hat{y}\in \Re^{N\times 1}.
\end{equation}
This definition is used in PyTorch to implement cross entropy. Here we assume that the predicted logit, i.e, the argument of the max value in the logits $\hat{y}$ is $y_{\hat{y}}$. While training, $y_{\hat{y}}$ is the true label. In this paper, we backpropagate any trained class $I$, as an introspective class. Hence, Eq.~\ref{eq:CE} can be rewritten as, 
\begin{equation}\label{eq:CE_Im1}
J(y_I, \hat{y}) = -y_{I} + \text{log} \sum_{j=1}^{N} e^{y_j}, \text{ where } \hat{y} = f(x), \hat{y}\in \Re^{N\times 1}.
\end{equation}
Approximating the exponent within the summation with its second order Taylor series expansion, we have,
\begin{equation}
J(y_I, \hat{y}) = -y_{I} + \text{log} \sum_{j=1}^{N} \bigg(1 + y_j + \frac{y_j^2}{2}\bigg).
\end{equation}
Note that for a well trained network $f(\cdot)$, the logits of all but the predicted class are negligible. As noted before, the predicted logit is $y_{\hat{y}}$. Taking $y_{\hat{y}}$ within the summation as common,
\begin{equation}
J(y_I, \hat{y}) = -y_{I} + \text{log} \sum_{j=1}^{N} y_{\hat{y}}\bigg(\frac{1}{y_{\hat{y}}} + \frac{y_j}{y_{\hat{y}}} + \frac{y_j^2}{2y_{\hat{y}}}\bigg).
\end{equation}
Taking out $y_{\hat{y}}$ from within the summation since it is a constant and independent of summation variable $j$,
\begin{equation}
J(y_I, \hat{y}) = -y_{I} + \text{log}\bigg( y_{\hat{y}}\sum_{j=1}^{N}\bigg(\frac{1}{y_{\hat{y}}} + \frac{y_j}{y_{\hat{y}}} + \frac{y_j^2}{2y_{\hat{y}}}\bigg)\bigg).
\end{equation}
Using product rule of logarithms,
\begin{equation}\label{eq:J_Intermediate}
J(y_I, \hat{y}) = -y_{I} + \text{log}(y_{\hat{y}}) + \text{log}\sum_{j=1}^{N} \bigg(\frac{1}{y_{\hat{y}}} + \frac{y_j}{y_{\hat{y}}} + \frac{y_j^2}{2y_{\hat{y}}}\bigg).
\end{equation}
For a well trained network, all logits except $y_{\hat{y}}$ are negligible. Also for a well trained network, $y_{\hat{y}}$ is large. Hence the third term on the RHS within the summation reduces to $1 + \frac{y_{\hat{y}}}{2}$. Note that in the second term of the RHS, $y_{\hat{y}} = c$ is a constant for any deterministic trained network even when there are small changes in the values of weights $W$. Substituting,
\begin{equation}\label{eq:J_Full}
J(y_I, \hat{y}) = -y_{I} + \text{log}(c) + \text{log}\bigg(1+\frac{y_{\hat{y}}}{2}\bigg).
\end{equation}
The quantity in Eq.~\ref{eq:J_Full} is differentiated, hence nulling the effect of constant log($c$). Hence, we can obtain $\nabla_W J(y_j, \hat{y})$ as a function of two logits, $y_I$ and $y_{\hat{y}}$ given by,
\begin{equation}
\nabla_W J(y_I, \hat{y}) = -\nabla_W y_I + \nabla_W \text{log}\bigg(1+\frac{y_{\hat{y}}}{2}\bigg).
\end{equation}
$y_I$ is a one-hot vector of dimensionality $N\times 1$ while $\nabla_W$ is a $d_{L-1}\times N$ matrix. The product extracts only the $I^{th}$ filter in the $W$ matrix in gradient calculations. Following the above logic for $y_{\hat{y}}$, we have, 
\begin{equation}\label{eq:J_Final}
\nabla_W J(y_I, \hat{y}) = -\nabla_{W, I} y_I + \nabla_{W, y_{\hat{y}}} g(y_{\hat{y}}),
\end{equation} 
where $g(\cdot)$ is some function of $y_{\hat{y}}$. Hence the gradient $r_I = \nabla_W J(y_I, \hat{y})$ lies in the span of the filter gradients of $W_I$ and $W_{\hat{y}}$, making $r_I$ orthogonal to all other filter gradient pairs. Hence proven.

Hence, for $N$ introspective features in~\ref{eq:Space_Grads}, the space complexity of $r_x$ which is a concatenation of $N$ separate $r_i$, reduces from $\mathcal{O}(d_{L-1}\times N^2)$ to $\mathcal{O}(d_{L-1}\times N)$.

\begin{figure}[!t]
\begin{center}
\minipage{\textwidth}%
  \includegraphics[width=\linewidth]{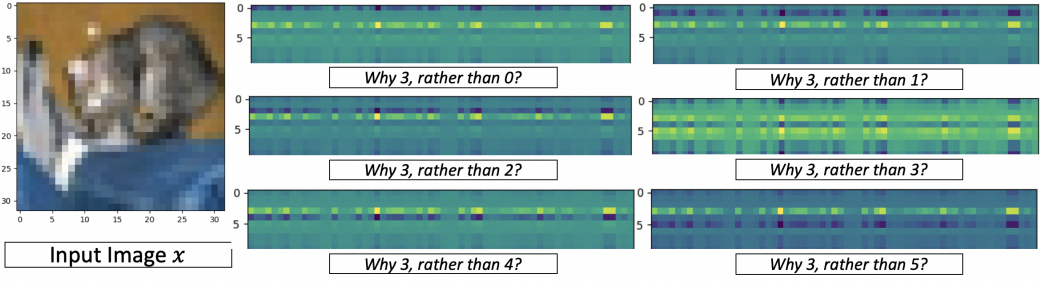}
\endminipage
\end{center}
\vspace{-2mm}
\caption{For the input image on the left, the $\nabla_{W_L} J(y_I, 3)$ are shown on the right. Each image is a visualization of the $64\times 10$ gradient matrix.}\label{fig:Sparse_CIFAR}
\end{figure}

Similar to Fig.~\ref{fig:Sparse} that was presented on a well trained network on MNIST dataset, we show the sparsity analysis on CIFAR-10 data in Fig.~\ref{fig:Sparse_CIFAR}. The sparse nature of gradients is still observed in Fig.~\ref{fig:Sparse_CIFAR} but it is not as prevalent as the gradients from Fig.~\ref{fig:Sparse}. This is because of the assumption of the well trained network in Lemma~\ref{lem:Space}. This assumption allows for Eq.~\ref{eq:J_Full} where we assume only the predicted logit and its closest logit are non-zero. However, in Table~\ref{table:H_ablation}, we show that the approximation does not alter the empirical results since the excess non-zero logits tend to store redundant information across filters. This is also observable in Fig.~\ref{fig:Sparse_CIFAR}.

\subsection{Proof for Theorem~\ref{lem:Time}}
\label{App:Lemma2}
The proof for Theorem~\ref{lem:Time} follows from Lemma~\ref{lem:Space}. For any given data $x$, there are $N$ possible introspections and hence $N$ possible reflections. The LHS in Eq.~\ref{eq:J_Final} is summed across $N$ losses. Since $y_j, j\in[1,N]$ are one-hot
vectors, they are orthogonal and the first term in RHS is an addition across $j$. The second term in RHS is independent of $j$. Representing this in equation form, we have, 
\begin{equation}\label{eq:J_Theorem}
\sum_{j=1}^N\nabla_W J(y_j, \hat{y}) =  -\sum_{j=1}^N\nabla_{W, j} y_j + N\times \nabla_{W, y_{\hat{y}}} g(y_{\hat{y}}).
\end{equation} 
The first term is added $N$ times for $N$ orthogonal $y_I$. Hence, the first term reduces to a sum of all gradients of $j^{th}$ filters when backpropagating $y_j$. Removing the summation and replacing $y_j = \textbf{1}_N$ or a vector of all ones in the LHS, we still have the same RHS given by,
\begin{equation}\label{eq:J_Theorem_Final}
\nabla_W J(\textbf{1}_N, \hat{y}) =  -\sum_{j=1}^N\nabla_{W, j} y_j + N\times \nabla_{W, y_{\hat{y}}} g(y_{\hat{y}}).
\end{equation}
Equating the LHS from Eq.~\ref{eq:J_Theorem} and Eq.~\ref{eq:J_Theorem_Final}, we have the proof.

\subsection{Tradeoff in Eq.~\ref{eq:B-V-C_Final}}
\label{App:Tradeoff}
Eq.~\ref{eq:B-V-C_Final} suggests a trade-off between minimizing $\mathbb{E}[(\mathcal{H}(r_x) - y)^2]$, which is the cost function for training $\mathcal{H}(\cdot)$, and the variance of the network $\mathcal{H}(\cdot)$. Ideally, an optimal point exists that optimally minimizes the cost function of $\mathcal{H}(\cdot)$ while maximizing its variance. This also prevents decomposing $\mathcal{H}(\cdot)$ into $\mathcal{H}_1(\cdot)$ and $\mathcal{H}_2(\cdot)$ that further introspect on $\mathcal{H}(\cdot)$. In this paper, we create a single introspective network $\mathcal{H}(\cdot)$. Hence, we do not comment further on the practical nature of the trade-off or perpetual introspection. It is currently beyond the scope of this work. In all experiments, we train $\mathcal{H}(\cdot)$ as any other network feed-forward network - by minimizing an empirical loss function given the ground truth.

\subsection{Fisher Vector Interpretation}
\label{App:FV}
We make two claims before Eq.~\ref{eq:Fish_Final} both of which are well established. These include :
\begin{itemize}
\item \textbf{Variance of a linear function} For a linear function $y = W\times x + b$, the variance of $y$ is given by $\text{Var}(Wx + b) = W^2 \text{Var}(x)$ if $\text{Var}(W) = 0$.
\item \textbf{Variance of a linear function when $W$ is estimated by gradient descent} Ignoring the bias $b$, and taking $y = Wx = x^T\Sigma^{-1} x^T (xW)$, we have $\text{Var}(Wx) = \sigma^2 \text{Tr}(x^T\Sigma^{-1}x)$. 
\end{itemize}
Both these results lead to Eq.~\ref{eq:Fish_Initial}. Since $r_x \in  \Re^{d_{L-1}\times N}$, the trace of the matrix given by $\text{Tr}( r_x^T F^{-1}r_x)$, is a sum of projections on individual weight gradients given by $\sum^{N}_{j = 1} r_j^T F^{-1}r_j$ in the Fisher sense. 

\begin{table*}[t]
\tiny
\caption{Structure of $\mathcal{H}(\cdot)$ and accuracies on CIFAR-10C as reported in the paper.}
\label{table:Architectures}
\vskip 0.15in
\begin{center}
\begin{small}
\begin{tabular}{l c r} 
 \toprule
 (Training Domain)$f(\cdot)$ & Part 1: Structure of $\mathcal{H}(\cdot)$ - All layers separated by sigmoid & Accuracy ($\%$) \\
 \midrule
 (CIFAR-10) R-18,34 &  $640\times300-300\times100-100\times10$ & 71.4, 73.36\\ 
 (CIFAR-10) R-50, 101 & $2560\times300-300\times100-100\times10$ & 75.2, 75.47\\ 
 \midrule
 
 (Webcam) R-18,34 & $1984 \times 31$ & - \\
 (Webcam) R-50,101 & $7936 \times 31$ & - \\
 \midrule 
 (Amazon) R-18,34 & $1984\times1000-1000\times100-100\times31$ & -\\
 (Amazon) R-50,101 & $7936\times3000-3000\times500-500\times31$ & -\\
 \midrule 
 (DSLR) R-18,34 & $1984\times1000-1000\times100-100\times31$ & -\\
 \midrule
 (VisDA) R-18 & $768\times300-300\times100-100\times12$ & -\\
 \bottomrule
\end{tabular}
\end{small}
\end{center}
\vskip -0.1in
\end{table*}
\section{Appendix : Additional Results on Recognition and Calibration}
\subsection{Structure of $\mathcal{H}(\cdot)$ and training details}
\label{App:H}
In this section, we provide the structure of the proposed $\mathcal{H}(\cdot)$ architecture. Note that, from Eq.~\ref{eq:Filter}, the $y_{feat}$ in feed-forward learning are processed through a linear layer. We process the introspective features $r_x$ through an MLP $\mathcal{H}(\cdot)$, whose parameter structure is given in Table~\ref{table:Architectures}. Hence, we follow the same workflow as feed-forward networks in introspective learning. The feed-forward features $f_{L-1}(x)$ are passed through the last linear layer in $f(\cdot)$ to obtain the prediction $\hat{y}$. The introspective features are passed through an MLP to obtain the prediction $\Tilde{y}$. The exact training procedure for $\mathcal{H}(\cdot)$ is presented below.

\paragraph{Training $f(\cdot)$ and Hyperparameters}We train four ResNet architectures - ResNet-18, 34, 50, and 101~\cite{he2016deep}. Note that we are not using any known techniques that promote either generalization (training on noisy data~\cite{vasiljevic2016examining}) or calibration (Temperature scaling~\cite{guo2017calibration}). The networks are trained from scratch on CIFAR-10 dataset which consists of $50000$ training images with $10$ classes. The networks are trained for $200$ epochs using SGD optimizer with momentum $= 0.9$ and weight decay $= 5e-4$. The learning rate starts at $0.1$ and is changed as $0.02, 0.004, 0.0008$ after epochs $60, 120,$ and $160$ respectively. PyTorch in-built Random Horizontal Flip and standard CIFAR-10 normalization is used as preprocessing transforms. 

\paragraph{Training $\mathcal{H}(\cdot)$ } The structures of all MLPs are shown in Table~\ref{table:Architectures}. ResNet-18,34 trained on CIFAR-10 provide $r_x$ of dimensionality $640\times 1$. This is fed into $\mathcal{H}(\cdot)$ which is trained to produce a $10\times 1$ output. Note that $r_x$ from ResNet-50,101 are of dimensionality $2560\times 1$ - due to larger dimension of $f_{L-1}()$. All MLPs are trained similar to $f(\cdot)$ - for $200$ epochs, SGD optimizer, momentum $= 0.9$, weight decay $= 5e^{-3}$, learning rates of $0.1, 0.02, 0.004, 0.0008$ in epochs $1-60, 61-120, 121-160, 161-200$ respectively. For the larger $5$-layered ResNet-50,101 networks in Table~\ref{table:H_ablation}, dropout with $0.1$ is used and the weight decay is reduced to $5e^{-4}$.

\subsection{Introspective Accuracy Gain and Calibration Error Studies}
\label{App:Gains}
In this section, we present additional recognition and calibration results. In Fig.~\ref{fig:Recognition_Fine}a), we showed distortion-wise accuracy and the introspective gain for ResNet-18. In this section, we present level-wise and network-wise accuracies for all four considered ResNet architectures. We show that an introspective ResNet-18 matches a Feed-Forward ResNet-50 in terms of recognition performance. We then compare the results of ResNet-18 against existing techniques that promote robustness. We show that introspection is a plug-in approach that acts on top of existing methods and provides gain. We do the same for calibration experiments on CIFAR-10C where we provide level-wise distortion-wise graphs for Expected Calibration Error (ECE) similar to Fig.~\ref{fig:Recognition_Fine}b).

\begin{figure}[!t]
\begin{center}
\minipage{\textwidth}%
  \includegraphics[width=\linewidth]{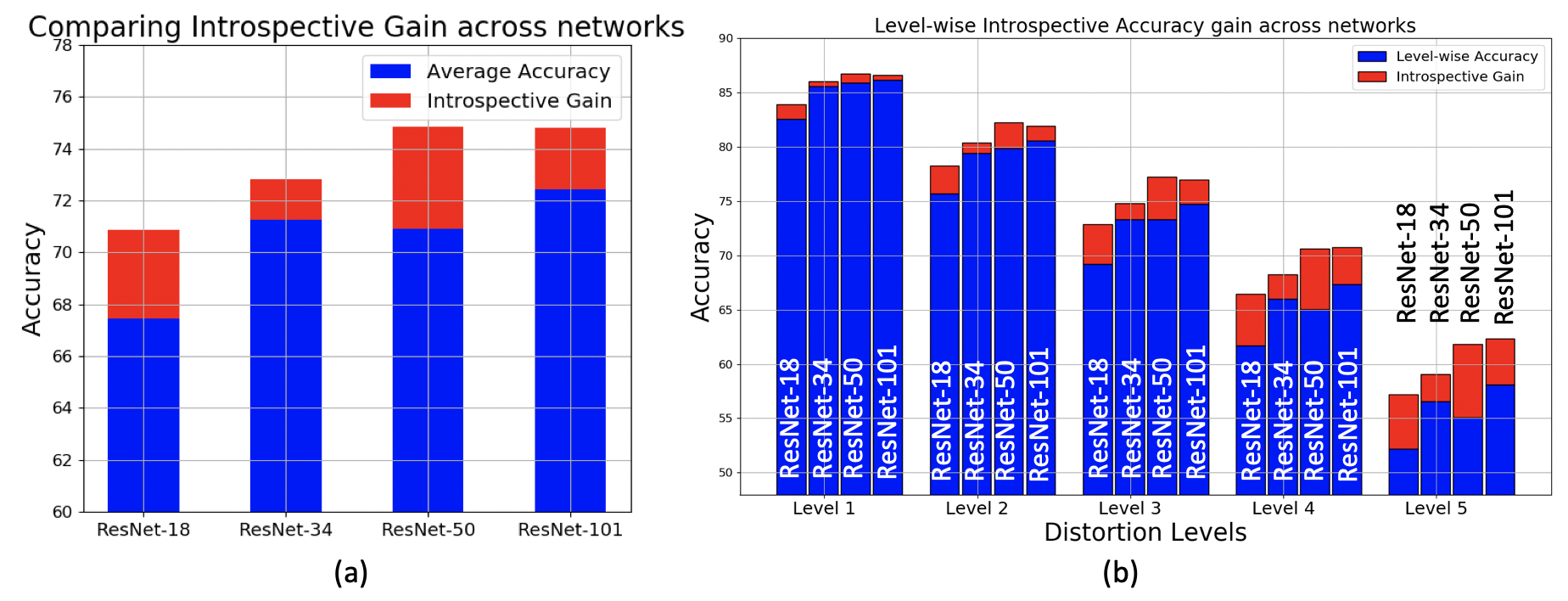}
\endminipage
\caption{Introspective performance gains over Feed-Forward networks of a) ResNets-18,34,50,101, b) Level-wise averaged results across ResNets-18,34,50,101}\label{fig:Level-wise}\vspace{-5mm}
\end{center}
\end{figure}

\subsubsection{Level-wise Recognition on CIFAR-10C}
\label{subsec:Lvl_Wise}
In Fig.~\ref{fig:Level-wise}b), the introspective performance gains for the four networks are categorized based on the distortion levels. All $19$ categories of distortion on CIFAR-10C are averaged for each level and their respective feed-forward accuracy and introspective gains are shown. Note that the levels are progressively more distorted. Hence, level 1 distribution $\mathcal{X}'$ is similar to the training distribution $\mathcal{X}$ when compared to level 5 distributions. As the distortion level increases, the introspective gains also increase. This is similar to the results from Section~\ref{sec:Downstream}. In both active learning and OOD applications as $\mathcal{X}'$ deviates from $\mathcal{X}$, introspection performs better. In Fig.~\ref{fig:Level-wise}a), we show the distortion-wise and level-wise increase for each network. Note that, an Introspective ResNet-18 performs similarly to a Feed-Forward ResNet-50. From~\cite{he2016deep}, the number of parameters in a ResNet-18 model ($1.8\times10^{9}$) are less than half the parameters in a ResNet-50 model ($3.8\times 10^{9}$). However by adding an introspective model $\mathcal{H}(\cdot)$ with $2.23\times10^5$ parameters to a feed-forward ResNet-18 model, we can obtain the same accuracy as a ResNet-50 model. This is in addition to the calibration gains provided by introspection.

\begin{figure}[!t]
\begin{center}
\minipage{\textwidth}%
  \includegraphics[width=\linewidth]{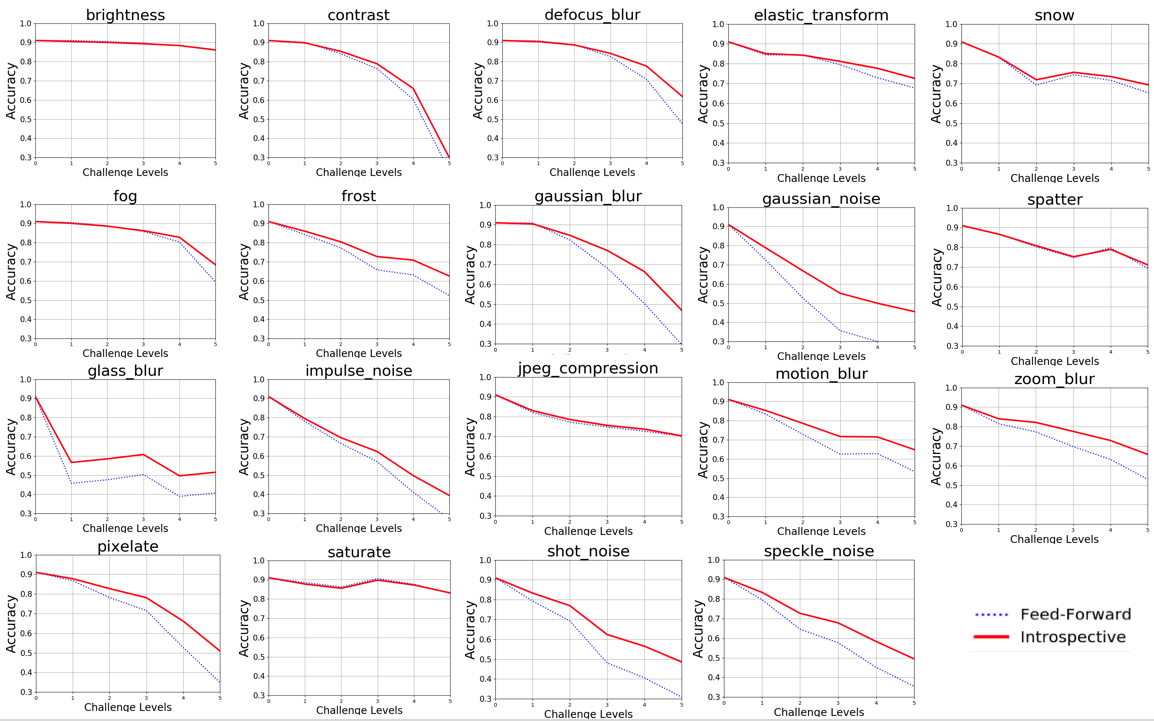}
\endminipage
\caption{Introspective performance gains over Feed-Forward Resnet-18 across distortions and levels}\label{fig:Dst_Level-wise}\vspace{-5mm}
\end{center}
\end{figure}

\subsubsection{Distortion-wise and Level-wise Recognition on CIFAR-10C}
\label{subsec:DsT_Lvl_Wise}
In Fig.~\ref{fig:Dst_Level-wise}, the introspective accuracy performance for Resnet-18 across $19$ distortions and $5$ distortion levels is shown. Note that CIFAR-10C consists of 950,000 test images. The 4\% increase in performance translates to around 35,000 more images correctly classified over its feed-forward counterpart. These gains are especially visible among Level 5 distortions.

\begin{table}[h]
\centering
\scriptsize
\caption{Introspecting on top of existing robustness techniques.}
\vspace{1mm} 
\begin{tabular}{l r} 
 \toprule
    Methods & Accuracy  \\ 
 \midrule
 ResNet-18 & $67.89\%$ \\ 
 Denoising & $65.02\%$  \\
 Adversarial Train~\cite{hendrycks2019benchmarking} & $68.02\%$ \\
 SimCLR~\cite{chen2020simple} & $70.28\%$  \\ 
 Augment Noise~\cite{vasiljevic2016examining} & $76.86\%$  \\
 Augmix~\cite{hendrycks2019augmix} & $89.85\%$ \\
 \midrule
 ResNet-18 + Introspection & $71.4\%$ \\
 Denoising + Introspection & $68.86\%$\\
 Adversarial + Introspection &  $70.86\%$\\
 SimCLR + Introspection & $73.32\%$ \\
 Augment Noise + Introspection & $77.98\%$ \\
 Augmix + Introspection & $89.89\%$ (ECE $43.33\%$ $\downarrow$) \\
 \bottomrule
\end{tabular}
\label{tab:Results_Compare}\vspace{-3mm}
\end{table}
\subsubsection{Introspection as a plug-in on top of existing techniques}
Several techniques exist that boost the robustness of neural networks to distortions. These include training with noisy images~\cite{vasiljevic2016examining}, training with adversarial images~\cite{hendrycks2019benchmarking}, and self-supervised methods like SimCLR~\cite{chen2020simple} that train by augmenting distortions. Another commonly used technique is to pre-process the noisy images to denoise them. All these techniques can be used to train $f(\cdot)$. Our proposed framework sits on top of any $f(\cdot)$. Hence, it can be used as a plug-in network. These results are shown in Table~\ref{tab:Results_Compare}. Denoising $19$ distortions is not a viable strategy assuming that the characteristics of the distortions are unknown. We use Non-Local Means denoising and the results obtained are lower than the feed-forward accuracy by almost $3\%$. However, introspecting on this model increases the results by $3.84\%$. We create untargeted adversarial images using I-FGSM attack with $\alpha = 0.01$ and use them to train a ResNet-18 architecture. In our experiments this did not increase the feed-forward accuracy. Introspecting on this network provides a gain of $2.84\%$. SimCLR~\cite{chen2020simple} and introspection on SimCLR is discussed in Section~\ref{App:SimCLR}. In the final experimental setup of augmenting noise~\cite{vasiljevic2016examining}, we augment the training data of CIFAR-10 with six distortions - gaussian blur, salt and pepper, gaussian noise, overexposure, motion blur, and underexposure - to train a ResNet-18 network $f'(\cdot)$. We use the noise characteristics provided by~\cite{temel2018cure} to randomly distort 500 CIFAR-10 training images by each of the six distortions. The original training set is augmented with the noisy data and trained. The results of the feed-forward $f'(\cdot)$ show a substantial increase in performance to $76.86\%$. This is about $9\%$ increase from the original architecture. We show that introspecting on $f'(\cdot)$ provides a further gain in accuracy of $1.12\%$. Note that to train $\mathcal{H}(\cdot)$, we do not use the augmented data. We only use the original CIFAR-10 undistorted training set. The gain obtained is by introspecting on only the undistorted data, even though $f'(\cdot)$ contains knowledge of the distorted data. Hence, introspection is a plug-in approach that works on top of any network $f(\cdot)$ or enhanced network $f'(\cdot)$. Augmix~\cite{hendrycks2019augmix} is currently the best performing technique on CIFAR-10C. It creates multiple chains of augmentations to train the base WideResNet network. On CIFAR-10C, $f'(\cdot)$ obtains $89.85\%$ recognition accuracy. We use $f'(\cdot)$ as our base sensing model and train an introspective MLP on $f'(\cdot)$. Note that we do not use any augmentations for training $\mathcal{H}(\cdot)$. Doing so, we obtain a statistically similar accuracy performance of $89.89\%$. However, the expected calibration error of the feed-forward $f'(\cdot)$ model decreases by $43.33\%$ after introspection. Hence, when there is no accuracy gains to be had, introspection provides calibrated models.

\begin{figure}[!t]
\begin{center}
\minipage{\textwidth}%
  \includegraphics[width=\linewidth]{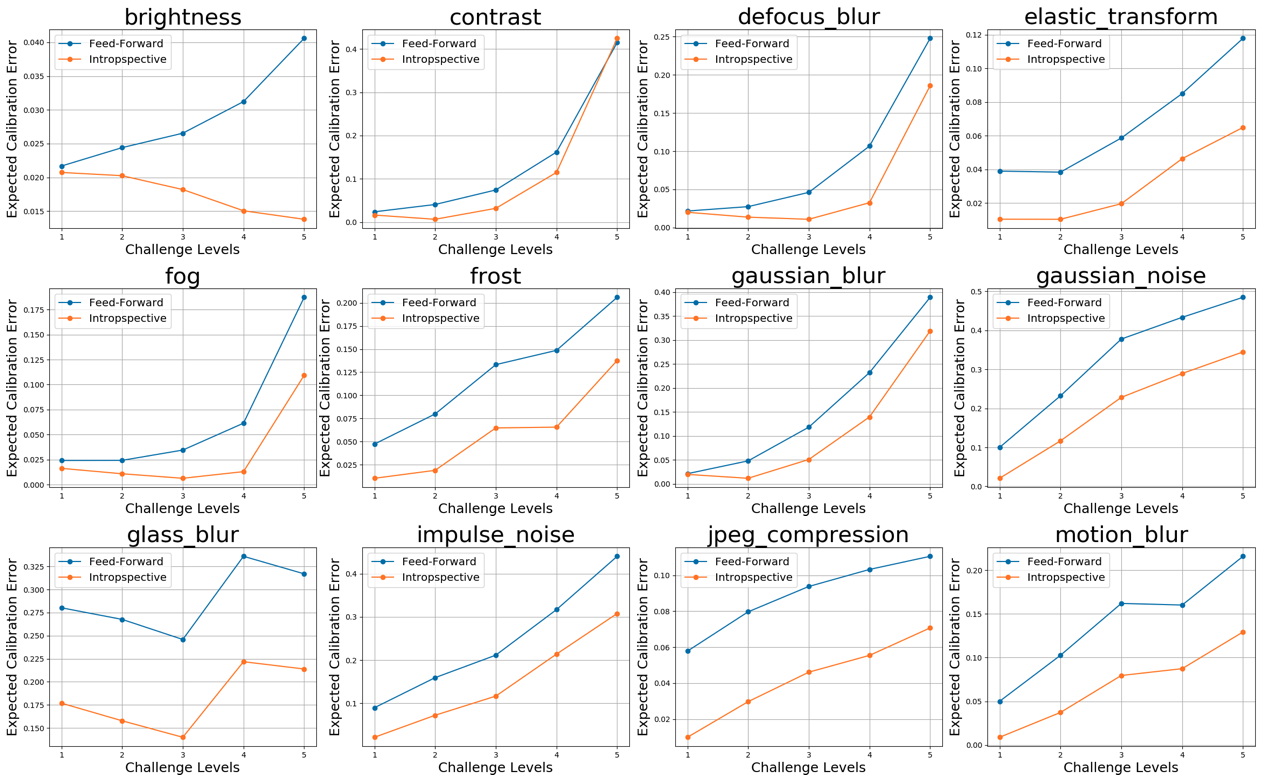}
\endminipage
\caption{ECE vs distortion levels across 12 separate distortions from CIFAR-10C for ResNet-18.}\label{fig:calibration}\vspace{-5mm}
\end{center}
\end{figure}

\begin{table}[h]
\centering
\scriptsize
\caption{Expected Calibration Error and Maximum Calibrated Error for Feed-Forward vs Introspective Networks.}
\vspace{1mm} 
\begin{tabular}{l c c c c r} 
 \toprule
    Architectures & & ResNet-18 & ResNet-34 & ResNet-50 & ResNet-101 \\ 
 \midrule
 \multirow{2}{*}{ECE ($\downarrow$)} & $f(\cdot)$ & 0.14 & 0.18 & 0.13 & 0.16 \\ 
 & $\mathcal{H}(\cdot)$ & $\textbf{0.07}$ & $\textbf{0.09}$ & $\textbf{0.06}$ & $\textbf{0.1}$ \\ 
 \midrule
 \multirow{2}{*}{MCE ($\downarrow$)} & $f(\cdot)$ & 0.27 & 0.34 & 0.27 & 0.32 \\ 
 & $\mathcal{H}(\cdot)$ & $\textbf{0.23}$ & $\textbf{0.24}$ & $\textbf{0.25}$ & $\textbf{0.23}$ \\
 \bottomrule
\end{tabular}
\label{tab:calibration}\vspace{-3mm}
\end{table}

\subsubsection{Expected Calibration Error (ECE)}
In Fig.~\ref{fig:Recognition_Fine}b), we show ECE for two distortion types - brightness and saturation across 5 distortion levels. In Fig.~\ref{fig:calibration}, we show results across five distortion levels for the first 12 distortions. The blue plot is the Feed-Forward ECE while the lower orange plot is its introspective counterpart. Apart from Level 5 contrast, intrsopective ResNet-18 is more calibrated than its feed-forward counterpart. This is in addition to the performance gains. The trend remains the same in the remaining distortions and among all considered networks. We average out ECE across 19 distortions and 5 challenge levels and provide ECE results for ResNets-18, 34, 50, 101 in Table~\ref{tab:calibration}. Lower the error, better is the architecture. The proposed introspective framework decreases the ECE of its feed-forward backbone by approximately $42\%$. An additional metric called Maximum Calibration Error (MCE) is also used for comparison. While ECE averages out the calibration difference in all bins (From Section~\ref{sec:Exp}), MCE takes the maximum error among all bins~\cite{guo2017calibration}. The introspective networks outperform their feed-forward backbones among all architectures when compared using ECE and MCE. 

\begin{table}[h]
\centering
\scriptsize
\caption{SimCLR and its supervised and introspective variations tested on CIFAR-10C.}
\vspace{1mm} 
\begin{tabular}{l c c c r} 
 \toprule
    Methods & ResNet-18 & ResNet-34 & ResNet-50 & ResNet-101 \\ 
 \midrule
 SimCLR~\cite{chen2020simple} & $70.28\%$ & $69.5\%$ & $67.32\%$ & $64.68\%$ \\ 
 SimCLR-MLP & $72.79\%$ & $72.54\%$ & $70.37\%$ & $70.89\%$ \\
 SimCLR-Introspective (Proposed) & $\textbf{73.32\%}$ & $\textbf{73.06\%}$ & $\textbf{71.28\%}$ & $\textbf{71.76\%}$ \\
 \bottomrule
\end{tabular}
\label{tab:SimCLR}\vspace{-3mm}
\end{table}

\subsection{SimCLR and Introspection}
\label{App:SimCLR}
SimCLR~\cite{chen2020simple} is a self-supervised contrastive learning framework that is robust to noise distortions. The algorithm involves creating augmentations of existing data including blur, noise, rotations, and jitters. The network is made to contrast between all the augmentations of the image and other images in the batch. A separate network head $g(\cdot)$ is placed on top of the network to extract features and inference is made by creating a similarity matrix to a feature bank. Note that $g(\cdot)$ is a simple MLP. Our proposed framework is similar to SimCLR in that we extract features and use an MLP $\mathcal{H}(\cdot)$ to infer from these features. In Table~\ref{tab:Results_Compare}, we show the results of Introspecting ResNets against SimCLR. However, this comparison is unfair since the features in SimCLR are trained in a self-supervised fashion. In this section, we train SimCLR for ResNets-18, 34, 50, 101 and train a new MLP $g(\cdot)$, not for extracting features, but to classify images. In other words, in~\cite{chen2020simple}, the authors create $g(\cdot)$ to be a $512\times 128$ layer that extracts features. We train a network of the form $512\times 128-128\times 10$ that is trained to classify images. We then introspect on this $g(\cdot)$ to obtain $r_x$. Hence, our extracted features are a result of introspecting on self-supervision. Note that $g(\cdot)$ is now a fully supervised network. We pass CIFAR-10C through $g(\cdot)$ and name it SimCLR-MLP in Table~\ref{tab:SimCLR}. It is unsurprising that the fully-supervised SimCLR-MLP beats the self-supervised SimCLR across all four ResNets. The introspective network is called SimCLR-Introspective in Table~\ref{tab:SimCLR}. Note that there is less than $1\%$ recognition performance increase across networks compared to SimCLR-MLP. Hence, the performance gains for introspecting on SimCLR-MLP is not as high as base ResNet architectures from Table~\ref{tab:Results_Compare}. One hypothesis for this marginal increase is that the notions created within SimCLR-MLP are predominantly from the self-supervised features in SimCLR. These may not be amenable for the current framework of introspection that learns to contrast between classes and not between features within-classes.

\subsection{Ablation Studies}
\label{App:Ablation}
The feature generation process in Section~\ref{sec:Background} is dependent on the loss function $J(\hat{y}, y)$. In this section, we analyze the performance of our framework for commonly used loss functions and show that the introspective network outperforms its feed-forward counterpart under any choice of $J(\hat{y}, y)$. We also ascertain the effect of the size of the parameter set in $\mathcal{H}(\cdot)$ on performance accuracy. 

\begin{table}
\tiny
\centering
\caption{Introspective Learning accuracies when $r_x$ is extracted with different loss functions for ResNet-18 on CIFAR-10C.}
\begin{tabular}{l c c c c c c c c r} 
 \toprule
   Feed-Forward & MSE-M & CE & BCE & L1 & L1-M & Smooth L1 & Smooth L1-M & NLL & SoftMargin  \\ [0.5ex] 
 \midrule
   $67.89\%$ & $\textbf{71.4\%}$ & $69.47\%$ & $70.76\%$ & $70.12\%$ & $70.72\%$ & $70.42\%$ & $70.63\%$ & $70.93\%$ & $70.91\%$ \\ [1ex] 
 \bottomrule
\end{tabular}
\label{table:Losses}
\end{table}
\vspace{-3mm}

\subsubsection{Effect of Loss functions}
We extract $r_x$ using $9$ loss functions and report the final distortion-wise level-wise averaged results Table~\ref{table:Losses}. We do so for ResNet-18 and for the architecture of $\mathcal{H}(\cdot)$ shown in Table~\ref{table:Architectures}. The following loss functions are compared : CE is Cross Entropy, MSE is Mean Squared Error, L1 is Manhattan distance, Smooth L1 is the leaky extension of Manhattan distance, BCE is Binary Cross Entropy, and NLL is Negative Log Likelihood. Notice that the performance of $r_x$ extracted using all loss functions exceed that of the feed-forward performance. The shown results of MSE, L1-M and Smooth L1-M are obtained by backpropagating a $\textbf{1}_N$ from Theorem~\ref{lem:Time} vector multiplied by the average of all maximum logits $M$, in the training dataset. We use $M$ instead of $1$ because we want the network to be as confidant of the introspective label $y_I$ as it is with the prediction label $\hat{y}$. Note that the results in Table~\ref{table:Losses} are for CIFAR-10C. MSE-M outperforms NLL loss by $0.37\%$ in average accuracy and is used in our experiments.

\begin{table*}[t]
\tiny
\caption{Ablation studies for $\mathcal{H}(\cdot)$ on CIFAR-10C.}
\label{table:H_ablation}
\vskip 0.15in
\begin{center}
\begin{small}
\begin{tabular}{l c r} 
 \toprule
 & \textbf{Part 1 : Varying the number of layers} & \\
 \midrule
 \multirow{5}{*}{R-18} & Feed-Forward $64\times 10$ & 67.89\%\\
 & $640\times 10$ & 71\% \\
 & $640\times 100-100\times 10$ & \textbf{71.57\%}\\
 & $640\times300-300\times100-100\times10$ & 71.4\%\\
 & $640\times400-400\times200-200\times100-100\times 10$ & 66.1\%\\
 \\
 \multirow{3}{*}{R-50} & Feed-Forward $64\times 10$ & 71.8\%\\
 & $2560\times300-300\times100-100\times10$ & \textbf{75.2\%} \\
 & $2560\times 1000-1000\times 500-500\times 300-300\times 100-100\times 10$ & 73\%\\
 \midrule
 & \textbf{Part 2 : Is the performance increase only because of a large $\mathcal{H}(\cdot)$?} & \\
 \midrule 
 \multirow{14}{*}{R-18} & Feed-Forward & 67.89\% \\
 & $f_L(\cdot)$ 1 Layer : $64\times 10$ & 67.86\% \\
 & $\mathcal{H}(\cdot)$ 1 Layer : $640\times 10$ & \textbf{71\%}\\
 \vspace{2mm}
 \vspace{1mm}
 & $f_L(\cdot)$ 3 Layers $64\times30-30\times20-20\times10$ & 63.61\%\\
 & $f_L(\cdot)$ 3 Layers $64\times512-512\times256-256\times10$ & 64.78\%\\
 & $\mathcal{H}(\cdot)$ 3 Layers: $640\times300-300\times100-100\times10$ & \textbf{71.4\%}\\
 \vspace{2mm}
 \vspace{1mm}
 & $f_L(\cdot)$, $6200$ parameters : $64\times50-50\times40-40\times20-20\times10$ & 66.85\%\\
 & $\mathcal{H}(\cdot)$, $6400$ parameters : $640\times 10$ & \textbf{71\%}\\
 \vspace{2mm}
 \vspace{1mm}
 & Prediction on $y_{feat}$ using 10-NN (No $f_L(\cdot)$) & 66.31\% \\
 & Prediction on $r_x$ using 10-NN (No $\mathcal{H}(\cdot)$) & \textbf{68.76\%}\\
 \midrule
 & \textbf{Part 3 : VGG-16} & \\
 \midrule
 \multirow{3}{*}{VGG-16} & Feed-Forward & $68.96\%$ \\
 & $f(\cdot)$ $512\times1024-1024\times256-256\times10$ & 62.43\% \\
 & $\mathcal{H}(\cdot)$ $5120\times1000-1000\times100-100\times10$ & \textbf{73.79\%} \\
 \midrule
 & \textbf{Part 4 : Effect of approximation of Lemma~\ref{lem:Space} and Theorem~\ref{lem:Time}} & \\
 \midrule
 \multirow{3}{*}{R-18} & Feed-Forward & $67.89\pm 0.23\%$ \\
 & With Approximation & $71.57\pm 0.12\%$ \\
 & Without Approximation & $71.43\pm 0.11\%$ \\
 \bottomrule
\end{tabular}
\end{small}
\end{center}
\vskip -0.1in
\end{table*}
\subsubsection{Effect of $\mathcal{H}(\cdot)$}
We conduct ablation studies to empirically show the following : 1) the design of $\mathcal{H}(\cdot)$ does not significantly vary the introspective results, 2) the extra parameters in $\mathcal{H}(\cdot)$ are not the cause of increased performance accuracy.

\paragraph{How does changing the structure of $\mathcal{H}(\cdot)$ change the performance?} We vary the architecture of $\mathcal{H}(\cdot)$ from a single linear layer to $4$ layers in the first half of Table~\ref{table:H_ablation} for ResNet-18. The results in the first three cases are similar. A four layered network performs worse than $f(\cdot)$. However, changing the weight decay from $5e^{-3}$ to $5e^{-4}$ during training increases the results to above $70\%$ but does not beat the smaller networks. For ResNet-18 architecture, the highest results are obtained when $\mathcal{H}(\cdot)$ is a 2-layered architecture but for the sake of uniformity, we use the results from a 3-layered network across all ResNet architectures.  

\paragraph{Are the extra parameters in $\mathcal{H}(\cdot)$ the only cause for increase in performance accuracy?} We show an ablations study of the effect of structure of $\mathcal{H}(\cdot)$ and $f(\cdot)$ on the introspective and feed-forward results in Part 2 of of Table~\ref{table:Architectures} on CIFAR-10-C dataset. The results are divided into four sections. In the first section, we show the performance of the original feed-forward network $f(\cdot)$, the performance when the final layer, $f_L(\cdot)$ is retrained using features $y_{feat}$ from Eq.~\ref{eq:Filter}, and the introspective network when $\mathcal{H}(\cdot)$ is a single layer. The second section shows the results when the features $y_{feat}$ are used to train a three layered network $f_L(\cdot)$, and the introspective network is also three layered. Finally, in section 3, we try to equate the number of parameters for $f_L(\cdot)$ and $\mathcal{H}(\cdot)$. Note that in all cases, $f_L(\cdot)$ and $\mathcal{H}(\cdot)$ are trained in the same manner as detailed in Section~\ref{App:H}. $\mathcal{H}(\cdot)$ beats the performance of $f_L(\cdot)$ among all ablation studies. Finally, similar to SimCLR, we forego using an MLP and use 10-Nearest Neighbors on $y_{feat}$ ($64\times 1$) and $r_x$ ($640\times 1$) for predictions. Both results are worse-off than their MLP results but $r_x$ outperforms $y_{feat}$.

\paragraph{Introspection on larger $r_x$ and wider $f_{L-1}(\cdot)$} On Resnet-101 experiments in Section~\ref{subsec:Lvl_Wise}, $r_x$ is of dimension $2560\times 1$. On all Resnet-18 experiments, $r_x$ is $640\times 1$. In part 3 of Table~\ref{table:Architectures}, we show results on a larger VGG-16 architecture where $r_x$ is of size $5120\times 1$. Row 1 shows the normal feed-forward accuracy. Row 3 is the introspective results on VGG-16 and the results are $4.83\%$ higher than its feed-forward counterpart. In row 2, we expand the penultimate layer of $f(\cdot)$ from $512$ to $1024$ so as to include more parameters in the feed-forward network. Note that this is not possible for Resnet-18 as $f_{L-1}(\cdot)$ is $64\times 1$. Hence, we compare our introspective network against a wider $f(\cdot)$ architecture instead of an elongated architecture like in Part 2. In both cases, introspection outperforms additional parameters in $f(\cdot)$.

\paragraph{Effect of Lemma~\ref{lem:Space} and Theorem~\ref{lem:Time}} Note that with approximations from Lemma~\ref{lem:Space} and Theorem~\ref{lem:Time}, $r_x$ generation occurs with a time complexity of $\mathcal{O}(1)$. Without approximation, $r_x$ generation occurs in $\mathcal{O}(N)$. However, the results are statistically insignificant when averaged across $5$ seeds.

\begin{figure}[!t]
\begin{center}
\minipage{\textwidth}%
  \includegraphics[width=\linewidth]{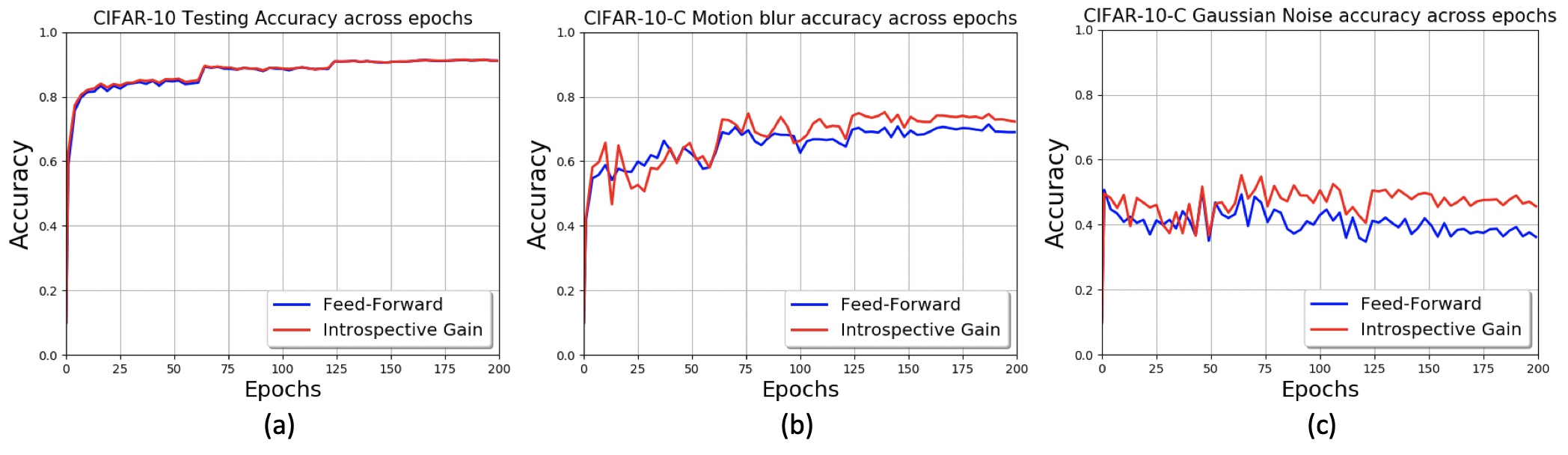}
\endminipage
\caption{Introspective vs. Feed-Forward accuracy of ResNet-18 across training epochs on (a) CIFAR-10 original testset, (b) CIFAR-10C Motion Blur Testset on all $5$ challenge levels, (c) CIFAR-10C Gaussian Noise Testset on all $5$ challenge levels.}\label{fig:Epochs_Tests}\vspace{-5mm}
\end{center}
\end{figure}

\begin{figure}[!t]
\begin{center}
\minipage{\textwidth}%
  \includegraphics[width=\linewidth]{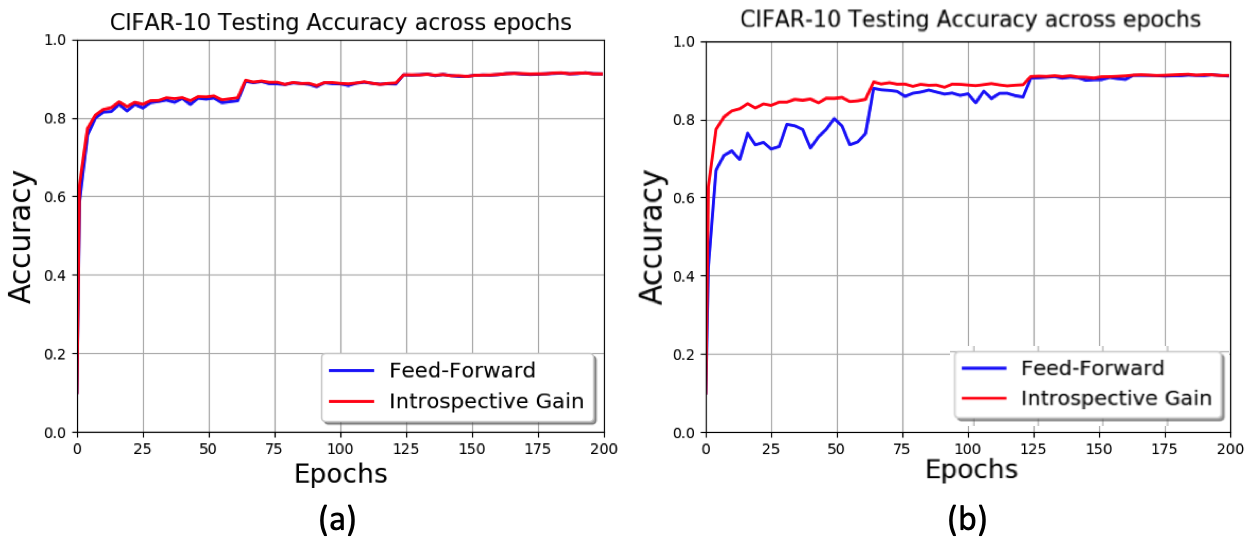}
\endminipage
\caption{Introspective vs. Feed-Forward accuracy of ResNet-18 across training epochs when (a) $f(\cdot)$ and $\mathcal{H}(\cdot)$ are trained on the same training set (b) $\mathcal{H}(\cdot)$ is trained on a separate held-out validation set}\label{fig:Validation}\vspace{-5mm}
\end{center}
\end{figure}

\subsection{Introspective accuracy across training epochs}
\label{App:Training}
In Section~\ref{sec:Background}, we make the assumption that $f(\cdot)$ is well trained to approximate $r_x$ using Theorem~\ref{lem:Time}. In Section~\ref{sec:Theory}, the Fisher Vector analysis works when the gradients form distances across the manifold in $f(\cdot)$ which occurs if $f(\cdot)$ is well trained. In this section we show that, practically, introspection performs as well as feed-forward accuracy across training epochs on CIFAR-10 testset and outperforms feed-forward accuracy on CIFAR-10C distortions. We show results on original testset, gaussian noise and motion blur testsets in Fig.~\ref{fig:Epochs_Tests}.

\paragraph{Training, Testing, and Results in Fig.~\ref{fig:Epochs_Tests}a}In this experimental setup, ResNet-18 is trained for $200$ epochs. The model states at multiples of $3$ epochs from $1$ to $200$ are stored. This provides $67$ states of $f(\cdot)$ along its training process. Each $f(\cdot)$ is tested on CIFAR-10 testset and the recognition accuracy is plotted in blue in Fig.~\ref{fig:Epochs_Tests}a). The introspective features $r_x$ for all $67$ states are extracted for the $50,000$ training samples. These $r_x$ are used to train $67$ separate $\mathcal{H}(\cdot)$ of structure provided in Table~\ref{table:Architectures} with a similar training setup as in Section~\ref{App:H}. The $r_x$ from the $10,000$ testing samples are extracted individually for each of the $67$ $f(\cdot)$ states and tested. The results are plotted in red in Fig.~\ref{fig:Epochs_Tests}a). Note the sharp spikes at epochs $60$ and $120$ where there is a change in the learning rate. Hence, when training and testing distributions are similar, introspective and feed-forward learning provides statistically similar performance across varying states of $f(\cdot)$.

\paragraph{Training, Testing, and Results in Fig.~\ref{fig:Epochs_Tests}b, c}We now consider the case when a network $f(\cdot)$ is trained on distribution $\mathcal{X}$ and tested on $\mathcal{X}'$ from CIFAR-10C distortions. The $67$ trained models of ResNet-18 are tested on two distortions from CIFAR-10C. From the results in Fig.~\ref{fig:Recognition_Fine}, introspective learning achieves one of its highest performance gains in Gaussian noise, and an average increase in motion blur after epoch $200$. The results in Fig.~\ref{fig:Epochs_Tests} indicate that after approximately $60$ epochs, the feed-forward network has sufficiently sensed notions to reflect between classes. This is seen in the performance gains in both the motion blur and Gaussian noise experiments.

\paragraph{Training of $\mathcal{H}$ on a separate validation set in Fig.~\ref{fig:Validation}b} In all experiments, the introspective network $\mathcal{H}(\cdot)$ is trained on the same training set as $f(\cdot)$. In Fig.~\ref{fig:Validation}, we show the results when the introspective network is trained on a separate portion of the dataset. We use $40,000$ images to train $f(\cdot)$ and $10,000$ to train $\mathcal{H}(\cdot)$ both of which are randomly chosen. We follow the training procedure from before. The model states at multiples of $3$ epochs from $1$ to $200$ are stored. This provides $67$ states of $f(\cdot)$ along its training process. Each $f(\cdot)$ is tested on CIFAR-10 testset and the recognition accuracy is plotted in blue in Fig.~\ref{fig:Validation}b). The $\mathcal{H}(\cdot)$ at each iteration on the other hand is trained with the $10,000$ images. However, it has access to the notions created from the remaining $40,000$ images and hence the results for introspection match Fig.~\ref{fig:Epochs_Tests}a) which is reproduced in Fig.~\ref{fig:Validation}a). The feed-forward results catch up to the introspective results around epoch $60$. At Epoch $120$, we add back the $10,000$ held-out images into the training set of $f(\cdot)$ and the results match between Fig.~\ref{fig:Validation}a) and Fig.~\ref{fig:Validation}b).

\begin{table}[t!]
\centering
\scriptsize
\caption{Performance of Proposed Introspective $\mathcal{H}(\cdot)$ vs Feed-Forward $f(\cdot)$ Learning under Domain Shift on Office dataset}
\vspace{1mm} 
\begin{tabular}{l c c c c c c r} 
 \toprule
    &  & DSLR & DSLR & Amazon & Amazon & Webcam & Webcam \\ 
   Architectures & & $\downarrow$ & $\downarrow$ & $\downarrow$ & $\downarrow$ & $\downarrow$ & $\downarrow$ \\ 
   & & Amazon & Webcam & DSLR & Webcam & DSLR & Amazon \\
 \midrule
 ResNet-$18$ & $f(\cdot)$  & $39.1$ & $78$ & $62.9$ & $59$ & $89.8$ & $42.2$ \\ 
 (\%)& $\mathcal{H}(\cdot)$ & $\textbf{47}$ & $\textbf{90.7}$ & $\textbf{67.3}$ & $\textbf{63.9}$ & $\textbf{96}$ & $\textbf{44}$ \\ 
 \midrule
 ResNet-$34$ & $f(\cdot)$ & $41.8$ & $83.3$ & $\textbf{67.3}$ & $60.1$ & $90.6$ & $41.7$ \\ 
 (\%)& $\mathcal{H}(\cdot)$ & $\textbf{46.4}$ & $\textbf{89.8}$ & $\textbf{67.3}$ & $\textbf{63.9}$ & $\textbf{97.8}$ & $\textbf{43.3}$ \\ 
 \midrule
 ResNet-$50$ & $f(\cdot)$ & - & - & $67.3$ & $62$ & $92.4$ & \textbf{$33.4$} \\ 
 (\%)& $\mathcal{H}(\cdot)$  & - & - & $\textbf{78.1}$ & $\textbf{68.4}$ & $\textbf{97.8}$ & $30.8$ \\ 
 \midrule
 ResNet-$101$ & $f(\cdot)$ & - & - & $62.9$ & $59$ & $89.8$ & $31.77$ \\ 
 (\%)& $\mathcal{H}(\cdot)$ & - & - & $\textbf{76.5}$ & $\textbf{67.3}$ & $\textbf{92.4}$ & $\textbf{33.6}$ \\ 
 \bottomrule
\end{tabular}
\label{tab:DA}\vspace{-3mm}
\end{table}

\begin{table}[h!]
\centering
\tiny{}
\caption{Performance of Proposed Introspective $\mathcal{H}(\cdot)$ vs Feed-Forward $f(\cdot)$ Learning under Domain Shift on VisDA Dataset}
\vspace{1mm} 
\begin{tabular}{l c c c c c c c c c c c c r} 
\toprule
   ResNet-$18$ & Plane & Cycle & Bus & Car & Horse & Knife & Bike & Person & Plant & Skate & Train & Truck & All \\ [0.5ex] 
 \midrule
 $f(\cdot)$ (\%) & $27.6$ & $7.2$ & $\textbf{38.1}$ & $54.8$ & $43.3$ & $\textbf{4.2}$ & $\textbf{72.7}$ & $\textbf{8.3}$ & $28.7$ &  $22.5$ & $\textbf{87.2}$ & $2.9$ & $38.1$\\ 
 $\mathcal{H}(\cdot)$ (\%) & $\textbf{39.9}$ & $\textbf{27.6}$ & $19.6$ & $\textbf{79.9}$ & $\textbf{73.5}$ & $2.7$ & $46.6$ & $6.5$ & $\textbf{43.8}$ &  $\textbf{30}$ & $73.6$ & $\textbf{4.3}$ & $\textbf{43.58}$\\
 \bottomrule
\end{tabular}
\label{table:VisDA}
\end{table}

\subsection{Results on large images}
\subsubsection{Domain Adaptation on Office dataset}
\label{App:Office}
In Section~\ref{sec:Theory}, we claim that introspection helps a network to better classify distributions that it has not seen while training. In Section~\ref{sec:Exp}, we tested on $95$ new distributions in CIFAR-10C and $30$ new distributions in CIFAR-10-CURE. In this section, we evaluate the efficacy of introspection when there is a domian shift between training and testing data under changes in background, and camera acquisition setup among others. Specifically, the robust recognition performance of $\mathcal{H}(\cdot)$ is validated on Office~\cite{saenko2010adapting} dataset using Top-$1$ accuracy. The Office dataset has $3$ domains - images taken from either Webcam or DSLR, and extracted from Amazon website. Images can belong to any of $31$ classes and they are of varying sizes - upto $1920 \times 1080$. Hence, results on Office shows the applicability of introspection on large resolution images. ImageNet pre-trained ResNet-$18$,$34$,$50$,$101$~\cite{he2016deep} architectures are used for $f(\cdot)$. The final layer is retrained using the source domain while the remaining two domains are for testing. The experimental setup, the same detailed in Section~\ref{sec:Exp}, is applied and the Top-$1$ accuracy is calculated. The results are summarized in Table~\ref{tab:DA}. In every instance, the top domain is $\mathcal{X}$ - the training distribution, and the bottom domain is $\mathcal{X}'$ - the testing distribution. Note that ResNet-$50$ and $101$ failed to train on $498$ images in DSLR source domain. The results of introspection exceed that of feed-forward learning in all but ResNet-$50$ when classifying between Webcam and Amazon domains. 
\vspace{-2mm}

\subsubsection{Domain Adaptation on Vis-DA dataset}
\label{App:VisDA}
Validation results on a synthetic-to-real domain shift dataset called VisDA~\cite{peng2017visda} are presented in Table~\ref{table:VisDA}. VisDA has $12$ classes with about $152,000$ synthetic training images, and $55,000$ real validation images. The validation images are cropped images from MS-COCO. ResNet-$18$ architecture pretrained on ImageNet is finetuned on the synthetically generated training images from VisDA dataset fro 200 epochs. It is then tested on the validation images and the recognition performance is shown in Table~\ref{table:VisDA} as feed-forward $f(\cdot)$ results. Introspective $\mathcal{H}(\cdot)$ results are obtained and shown when $f(\cdot)$ is ResNet-18. There is an overall improvement of $5.48\%$ in terms of performance accuracy. However, the individual class accuracies leave room for improvement. 

\begin{figure}[!t]
\begin{center}
\minipage{\textwidth}%
  \includegraphics[width=\linewidth]{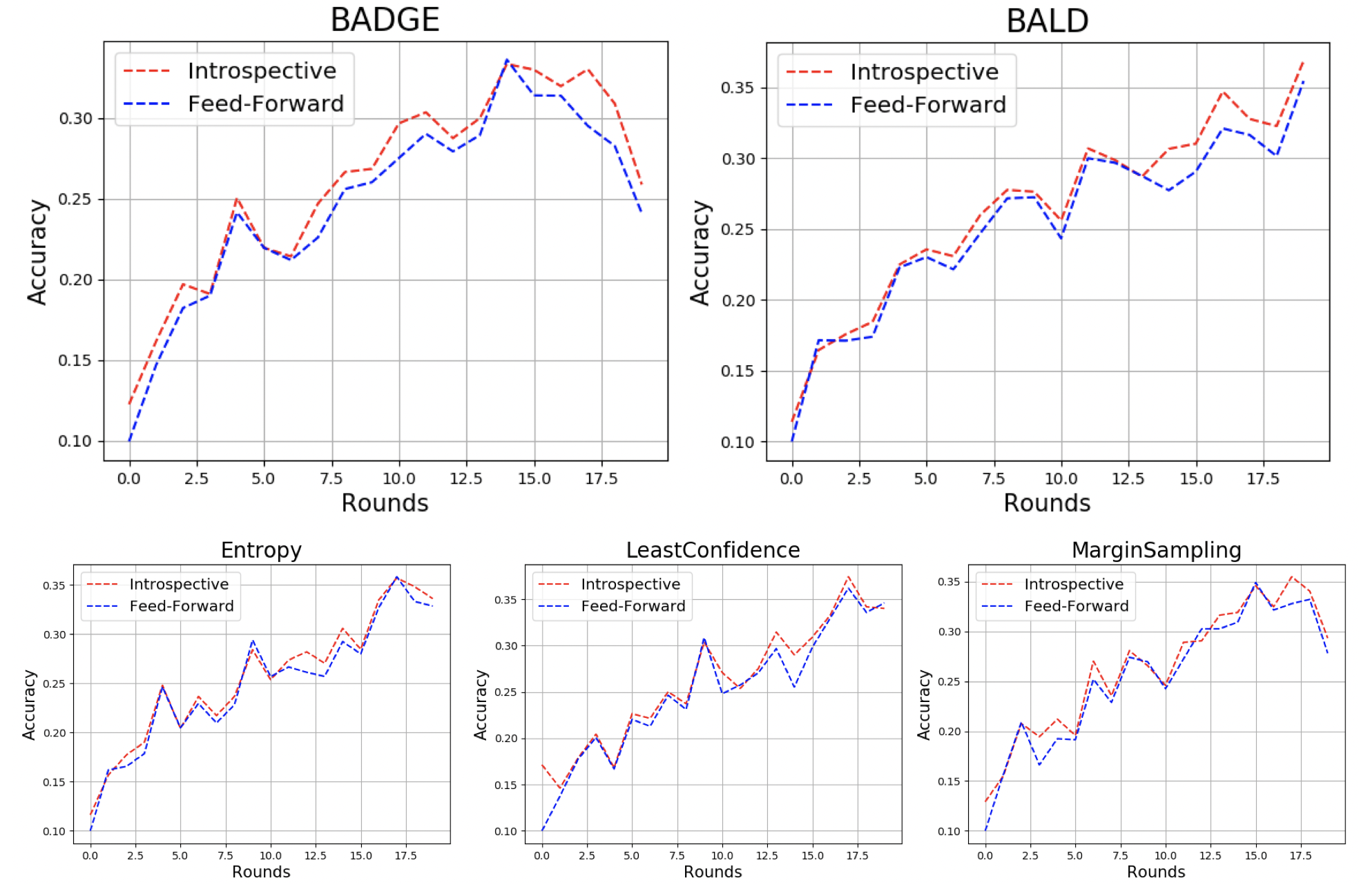}
\endminipage
\caption{Introspective vs. Feed-Forward accuracy of ResNet-18 across training rounds for state-of-the-art techniques in an active learning setting. The query batch size per round is 1000. The trainset is CIFAR-10 and testset is Gaussian Noise from CIFAR-10C.}\label{fig:AL}\vspace{-5mm}
\end{center}
\end{figure}
\vspace{-2mm}

\section{Appendix: Downstream Applications}
\label{sec:Downstream}
\subsection{Active Learning}
\label{App:AL}
In Table~\ref{tab:AL}, the mean recognition accuracies across the first 20 rounds of Active Learning experiments for commonly used query strategies are shown. We plot these recognition accuracies across for all five query strategies in Fig.~\ref{fig:AL}. The \texttt{x-axis} is the round at which the performance is calculated. The calculated accuracy is plotted on the \texttt{y-axis}. The experiment starts with a random $100$ images in round 1. Each strategy queries using either a round-wise sample trained $f(\cdot)$ or a round-wise sample trained $\mathcal{H}(\cdot)$. Note that at each round, the networks are retrained. This continues for 20 rounds. Both BALD~\cite{gal2017deep} and BADGE~\cite{ash2019deep} applied on $\mathcal{H}(\cdot)$ consistently beat its $f(\cdot)$ counterpart on every round. This is because both these methods rely on extracting features from the network as compared to the other three techniques that directly use the output logits from either $\mathcal{H}(\cdot)$ or $f(\cdot)$. Since the network is not well-trained at the initial stages - due to a dearth of training data - the introspective network is not as consistent as the feed-forward network among Entropy, Least Confidence, and Margin strategies. Nonetheless, $\mathcal{H}(\cdot)$ outperforms $f(\cdot)$ on average across all rounds.
\vspace{-2mm}
\begin{table}
\tiny
\caption{Out-of-distribution Detection of existing techniques compared between feed-forward and introspective networks when the data is under adversarial attack.}
  \label{tab:OOD_Attack}
  \centering
  \begin{tabular}{lcccc}
    \toprule
    Methods &  OOD    &  FPR & Detection & AUROC \\
    & Datasets & (95\% at TPR) & Error & \\
    & (Attack) & $\downarrow$ & $\downarrow$ & $\uparrow$ \\
    \midrule
    & & \multicolumn{3}{c}{Feed-Forward/Introspective} \\
    \cmidrule(r){3-5}
    \multirow{3}{*}{MSP~\cite{hendrycks2016baseline}}  & Textures  & 99.98/\textbf{23.19} & 45.9/\textbf{7.9}  & 30.4/\textbf{96.48} \\
    & iSUN & 98.63/\textbf{87.2} & 46.71/\textbf{28.95} & 46.44/\textbf{75.81}  \\
    & Places365 & 100/\textbf{83.59} & 47.64/\textbf{26.46} & 25.08/\textbf{79}  \\
    & LSUN-C & 99.65/\textbf{87.64} & 43.38/\textbf{26.31} & 43.47/\textbf{78.4}  \\
    \midrule
    \multirow{3}{*}{ODIN~\cite{liang2017enhancing}}  & Textures  & 99.95/\textbf{2.06} & 47.7/\textbf{3.48} & 37.5/\textbf{99.11}  \\
    & iSUN & 96.8/\textbf{90.42} & 44.77/\textbf{31.11} & 53.88/\textbf{73.22}  \\
    & Places-365 & 99.97/\textbf{82.5} & 47.12/\textbf{26.86} & 32.69/\textbf{78.88}  \\
    & LSUN-C & 98.6/\textbf{88.28} & 40.51/\textbf{27.88} & 56.7/ \textbf{77.25} \\
    \bottomrule
  \end{tabular}
\end{table}
\subsection{OOD}
\label{App:OOD}

\paragraph{Adversarial setting in Table~\ref{tab:OOD}}A datapoint $z$, is perturbed as $z+\epsilon$ and the goal of the detector is to classify $z \in \mathcal{X}$ or $z \in \mathcal{X}'$. This modality is proposed by the authors in~\cite{chen2020robust} and we use their setup. PGD attack with perturbation $0.0014$ is used. The same MSP and ODIN detectors from Table~\ref{tab:OOD} are utilized. On 4 OOD datasets, both MSP and ODIN show a performance gain across all three metrics on $\mathcal{H}(\cdot)$ compared to $f(\cdot)$. Note that the results in Table~\ref{tab:OOD_Attack} is for ResNet-18 architecture for the same $f(\cdot)$ and $\mathcal{H}(\cdot)$ used in other experiments including Fig.~\ref{fig:Recognition_All}.

\paragraph{Vanilla setting in Table~\ref{tab:OOD_Attack}}In Table~\ref{tab:OOD_Attack}, we show the results of out-of-distribution detection when $\mathcal{X}$ is CIFAR-10 and $\mathcal{X}'$ are the four considered datasets. Note that among the four datasets, textures and SVHN are more out-of-distribution from CIFAR-10 than the natural image datasets of Places365 and LSUN. The results of the introspective network is highest on Textures DTD dataset.  
\vspace{-2mm}

\subsection{Image Quality Assessment}
\label{app:IQA}

\setlength{\tabcolsep}{4pt}
\renewcommand{\arraystretch}{1.1}
\begin{table}[t!]
\scriptsize

\centering
\caption{Performance of Contrastive Features against Feed-Forward Features and other Image Quality Estimators. Top 2 results in each row are highlighted.}
\label{tab:iqa_results_subset}

\begin{tabular}{l c c c c c c c c c}
\hline


\multirow{3}{*}{{\bf Database }} & \bf PSNR  &\bf IW  &\bf SR &\bf FSIMc &\bf Per &\bf CSV &\bf SUM &\bf Feed-Forward &\bf Introspective\\
&\bf HA &\bf SSIM  &\bf SIM  & &\bf SIM & & \bf MER & \bf UNIQUE & \bf UNIQUE  \\ \hline

                & \multicolumn{9}{c}{\textbf{Outlier Ratio (OR, $\downarrow$)}}                                                                                                                                                                        \\ \hline
\textbf{MULTI}  
& 0.013 & 0.013 & \bf 0.000 & 0.016 & 0.004 & \bf 0.000 & \bf 0.000 & \bf 0.000 &  \bf 0.000
\\                            
\textbf{TID13}  
& \bf 0.615 & 0.701 & 0.632 & 0.728 & 0.655 & 0.687 & \bf 0.620 & 0.640 &  \bf 0.620
 \\ 
 \hline
               
                & \multicolumn{9}{c}{\textbf{Root Mean Square Error (RMSE, $\downarrow$)}}                                                                                                                                                                        \\ \hline

\textbf{MULTI}  
& 11.320 & 10.049 & 8.686 & 10.794 & 9.898 & 9.895 & \bf 8.212 & 9.258 &  \bf 7.943\\

\textbf{TID13}    & 0.652 & 0.688 & 0.619 & 0.687 & 0.643 & 0.647 & 0.630 & \bf 0.615 &  \bf 0.596
 \\ \hline
             
               & \multicolumn{9}{c}{\textbf{Pearson Linear Correlation Coefficient (PLCC, $\uparrow$)}}                                                                                                                                                                        \\ \hline 
               
\multirow{2}{*}{{\bf MULTI}}                                  
& 0.801 & 0.847 & 0.888 & 0.821 & 0.852 & 0.852 & \bf 0.901 &  0.872 &  \bf 0.908 \\                                          
& -1 & -1 & 0 & -1 & -1 & -1 & -1 & -1 & \\

\multirow{2}{*}{{\bf TID13}}                               
& 0.851 & 0.832 & 0.866 & 0.832 & 0.855 & 0.853 & 0.861 & \bf 0.869 &  \bf 0.877\\
& -1 & -1 & 0 & -1 & -1 & -1 & 0 & 0 & 
   \\ \hline

\textbf{}      & \multicolumn{9}{c}{\textbf{Spearman's Rank Correlation Coefficient (SRCC, $\uparrow$)}}                                                                                                                                                                        \\ \hline   

\multirow{2}{*}{{\bf MULTI}}                                          
& 0.715 & \bf 0.884 & 0.867 & 0.867 & 0.818 & 0.849 & \bf 0.884 &  0.867 &  \bf 0.887 \\   
& -1 & 0 & 0 & 0 & -1 & -1 & 0 & 0 & 
\\

\multirow{2}{*}{{\bf TID13}}
& 0.847 & 0.778 & 0.807 &0.851 & 0.854 & 0.846 &  0.856 & \bf 0.860 &  \bf 0.865 \\

& -1 & -1 & -1 & -1 & 0 & -1 & 0 & 0 &

  \\ \hline

\textbf{}      & \multicolumn{9}{c}{\textbf{Kendall's Rank Correlation Coefficient (KRCC)}}                                                                                                                                                                        \\ \hline

\multirow{2}{*}{{\bf MULTI}}                                              
& 0.532 &  \bf 0.702 & 0.678 & 0.677 & 0.624 & 0.655 &  0.698 & 0.679 &  \bf 0.702\\                                         

& -1 & 0 & 0 & 0 & -1 & 0 & 0 & 0 & 

 \\  

\multirow{2}{*}{{\bf TID13}}                                          
& 0.666 & 0.598 & 0.641 & 0.667 & \bf 0.678 & 0.654 & 0.667 & 0.667 & \bf 0.677 \\
& 0 & -1 & -1 & 0 & 0 & 0 & 0 & 0 & 
\\ \hline
\end{tabular}
\vspace{-5mm}
\end{table}
\paragraph{Related Works} Multiple methods have been proposed to predict the subjective quality of images including PSNR-HA~\cite{ponomarenko2011modified}, IW-SSIM~\cite{wang2011information}, SR-SIM~\cite{zhang2012sr}, FSIMc~\cite{zhang2011fsim}, PERSIM~\cite{temel2015persim}, CSV~\cite{Temel201692}, SUMMER~\cite{Temel_SPIC_2018}, ULF~\cite{prabhushankar2017generating}, UNIQUE~\cite{temel2016unique}, and MS-UNIQUE~\cite{prabhushankar2017ms}. All these methods extract structure related hand-crafted features from both reference and distorted images and compare them to predict the quality. Recently, machine learning models have been proposed to directly extract features from images~\cite{temel2016unique}.~\cite{temel2016unique} propose UNIQUE that uses a sparse autoencoder trained on ImageNet to extract features from both reference and distorted images. We use UNIQUE as our base network $f(\cdot)$.

\paragraph{Feed-Forward UNIQUE}~\cite{temel2016unique} train a sparse autoencoder with a one layer encoder and decoder and a sigmoid non-linearity on $100,000$ patches of size $8\times 8 \times 3$ extracted from ImageNet testset. The autoencoder is trained with MSE reconstruction loss. This network is our $f(\cdot)$. UNIQUE follows a full reference IQA workflow which assumes access to both reference and distorted images while estimating quality. The reference and distorted images are converted to YGCr color space and converted to $8 \times 8 \times 3$ patches. These patches are mean subtracted and ZCA whitened before being passed through the trained encoder. The activations of all reference patches in the latent space are extracted and concatenated. Activations lesser than a threshold of $0.025$ are suppressed to $0$. The choice of threshold $0.025$ is made based on the sparsity coefficient used during training. Similar procedure is followed for distorted image patches. The suppressed and concatenated features of both the reference and distorted images are compared using Spearman correlation. The resultant is the feed-forward estimated quality of the distorted image.  

\paragraph{Introspective-UNIQUE} We use the architecture and the workflow from~\cite{temel2016unique} which is based on feed-forward learning to demonstrate the value of introspection. We replace the feed-forward features with the proposed introspective features. The loss in Eq.~\ref{eq:Final grads} for introspection is not between classes but between the image $x$ and its reconstruction $\Tilde{x}$ from the sparse autoencoder from~\cite{temel2016unique}. For a reference image $x$, $r_x$ is derived using $J(x, \Tilde{x})$. Hence, gradients of $r_x$ span the space of reconstruction noise. Since the need in IQA is to characterize distortions, we obtain $r_x$ for reference images from the first layer and project both reference and distorted images onto $r_x$. These projections are compared using Spearman correlation to assign a quality estimate. In this setting, $\mathcal{H}(\cdot)$ is the projection operator and Spearman correlation. Hence, Introspective-UNIQUE broadens introspection in the following ways - 1) defining introspection on generative models, 2) using gradients in the earlier layers of a network.

\paragraph{Statistical Significance} We use the statistical significance code and experimental modality from the Feed-Forward model~\cite{temel2016unique}. Specifically, we follow the procedure presented for IQA statistical significance test regulations suggested by ITU-T Rec. P.1401. Normality tests are conducted on the human opinion scores within the datasets and those scores that significantly deviate from dataset-specific parameters are discarded. Hence, for the purpose of our statistical significance tests, we assume that the given scores are a good fit for normal distribution. The predicted correlation coefficients from the proposed Introspective-UNIQUE technique are compared individually against all other techniques in Table~\ref{tab:iqa_results_subset}. For the test itself, we use Fisher-Z transformation to obtain the normally distributed statistic between the compared methods. A 0 corresponds to statistically similar performance between Introspective-UNIQUE and the compared method, ´-1 means that the compared method is statistically inferior to Introspective-UNIQUE, and 1 indicates that the compared method is statistically superior to Introspective-UNIQUE. 

\paragraph{Results} We report the results of the proposed introspective model in comparison with commonly cited methods Table~\ref{tab:iqa_results_subset}. We utilize MULTI-LIVE (MULTI)~\cite{jayaraman2012objective} and TID2013~\cite{ponomarenko2015image} datasets for evaluation. The performance is validated using outlier ratio (consistency), root mean square error (accuracy), Pearson correlation (linearity), Spearman correlation (rank), and Kendall correlation (rank). Arrows next to each metric in Table~\ref{tab:iqa_results_subset} indicate the desirability of a higher number ($\uparrow$) or a lower number($\downarrow$). Two best performing methods for each metric are highlighted. The proposed framework is always in the top two methods for both datasets in all evaluation metrics. In particular, it achieves the best performance for all the categories except in OR and KRCC in TID2013 dataset. The feed-forward model does not achieve the best performance for any of the metrics in MULTI dataset. However, the same network using introspective features significantly improves the performance and achieves the best performance on all metrics. For instance, the feed-forward model is the third best performing method in MULTI dataset in terms of RMSE, PLCC, SRCC, and KRCC. However, the introspective features improve the performance for those metrics by $1.315$, $0.036$, $0.020$, and $0.023$, respectively and achieve the best performance for all metrics. This further reinforces the plug-in capability of the proposed introspective inference. Additionally, against no technique is our method not statistically significant in at least 1 metric.

\subsection{Uncertainty}
\label{App:Uncertainty}
\begin{table}[h]
\centering
\scriptsize
\caption{Uncertainty quantification algorithms measured against Introspective Resnet-18. Accuracy is recognition accuracy.}
\vspace{1mm} 
\begin{tabular}{l c c c r} 
 \toprule
    Modality & Techniques & \multicolumn{3}{c}{CIFAR-10-Rotations} \\ 
 \midrule
 & & Log-likelihood & Brier Score & Accuracy \\
 & & ($\uparrow$) & ($\downarrow$) &($\uparrow$) \\
 \midrule
 Ensemble & Bootstrap~\cite{abdar2021review} & $-2.03$ & $0.71$ & $0.47$ \\
 \midrule
 Bayesian & MC Dropout~\cite{gal2016dropout} & $-2.81$ & $0.85$ & $0.44$ \\
 Bayesian & BBP without lrt~\cite{kingma2015variational} & $-1.87$ & $0.74$ & $0.39$ \\
 Bayesian & BBP with lrt~\cite{kingma2015variational} & $-2.33$ & $0.78$ & $0.42$ \\
 \midrule
 Deterministic & TENT~\cite{wang2020tent} & $-15.54$ & $1.36$ & $0.30$ \\
 Deterministic & Introspective Resnet-18 & $-2.97$ & $0.89$ & $0.45$ \\
 \bottomrule
\end{tabular}
\label{tab:uncertainty}\vspace{-3mm}
\end{table}

Existing methods of uncertainty quantification are compared against an Introspective Resnet-18 in Table.~\ref{tab:uncertainty}. Column 1 denotes the modality of the method. A deterministic method like the proposed introspection and TENT~\cite{wang2020tent} require only a single pass through the network. Bayesian networks require multiple passes. Ensemble techniques require multiple networks. We use two uncertainty measures, log-likelihood and brier score, to measure uncertainty. A detailed review of these measures is presented in~\cite{abdar2021review}. For all methods, we use the same base Resnet-18 trained on the undistorted images from Section~\ref{sec:Exp} as our architecture. A rotated version of CIFAR-10 testset is used to determine uncertainty. For every image there are $16$ rotated versions of the same image. Hence, there are $160,000$ images in the testset. The results are presented in Table~\ref{tab:uncertainty}. The ensemble method outperforms all other techniques both interms of accuracy as well as uncertainty. However, the proposed introspective technique performs comparably to bayesian techniques among the uncertainty metrics while beating them in recognition accuracy. It outperforms TENT among both recognition accuracy and uncertainty metrics.

\section{Appendix: Reproducibility Statement}
\label{App:Reproducability}
The paper uses publicly available datasets to showcase the results. Our introspective learning framework is built on top of existing deep learning apparatus - including ResNet architectures~\cite{he2016deep} (inbuilt PyTorch architectures), CIFAR-10C data~\cite{hendrycks2019benchmarking} (source code at \texttt{https://zenodo.org/record/2535967\#.YLpTF-1KhhE}), calibration ECE and MCE metrics~\cite{guo2017calibration} (source code at \texttt{https://github.com/markus93/NN\_calibration}), out-of-distribution detection metrics, and codes for existing methods were adapted from~\cite{chen2020robust} (source code at \texttt{https://github.com/jfc43/robust-ood-detection}), active learning methods and their codes were adapted from~\cite{ash2019deep} (source code at \texttt{https://github.com/JordanAsh/badge}), Grad-CAM was adapted from~\cite{selvaraju2017grad} (code used is at \texttt{https://github.com/adityac94/Grad\_CAM\_plus\_plus}). Our own codes will be released upon acceptance. The exact training hyperparameters for $f(\cdot)$ and $\mathcal{H}(\cdot)$, and all considered $\mathcal{H}(\cdot)$ architectures are shown in Appendix~\ref{App:H}. Extensive ablation studies on $\mathcal{H}(\cdot)$ are shown in~\ref{App:Ablation}.

\end{document}